\newcommand{\Fig}[1]{Fig.~\ref{#1}}
\newcommand{\eqdef}{\stackrel{\scriptscriptstyle\bigtriangleup}{=} }
\newcommand{\argmin}{\operatornamewithlimits{argmin}}
\newcommand{\argmax}{\operatornamewithlimits{argmax}}
\newcommand{\T}{\mathsf{T}}
\newcommand{\R}{\mathbb{R}}
\newcommand{\calN}{\mathcal{N}}
\DeclareFontFamily{U}{MnSymbolA}{}
\DeclareSymbolFont{MnSyA}{U}{MnSymbolA}{m}{n}
\DeclareFontShape{U}{MnSymbolA}{m}{n}{
<-6> MnSymbolA5
<6-7> MnSymbolA6
<7-8> MnSymbolA7
<8-9> MnSymbolA8
<9-10> MnSymbolA9
<10-12> MnSymbolA10
<12-> MnSymbolA12}{}
\DeclareMathSymbol{\smallrightarrow}{\mathrel}{MnSyA}{0}
\DeclareMathSymbol{\smallleftarrow}{\mathrel}{MnSyA}{2}
\DeclareMathSymbol{\smallleftrightarrow}{\mathrel}{MnSyA}{16}
\newcommand{\smallrightarrowfill@}{\arrowfill@\relbar\relbar\smallrightarrow}
\newcommand{\smallleftarrowfill@}{\arrowfill@\smallleftarrow\relbar\relbar}
\newcommand{\smallleftrightarrowfill@}
{\arrowfill@\smallleftarrow\relbar\smallrightarrow}
\renewcommand{\overrightarrow}{\mathpalette{\overarrow@\smallrightarrowfill@}}
\renewcommand{\overleftarrow}{\mathpalette{\overarrow@\smallleftarrowfill@}}
\renewcommand{\overleftrightarrow}
{\mathpalette{\overarrow@\smallleftrightarrowfill@}}
\providecommand{\msgf}[2]{\protect\overrightarrow{#1}_{\mspace{-3mu}#2}} 
\providecommand{\msgb}[2]{\protect\overleftarrow{#1}_{\mspace{-3mu}#2}} 
\newcounter{theoremcntr}
\newenvironment{theorem}%
{\begin{trivlist}\item[]\refstepcounter{theoremcntr}%
{\emph{Theorem~\thetheoremcntr}: }}%
{\hfill$\Box$\end{trivlist}}
\newenvironment{proof}{\begin{trivlist}\item[]{\emph{Proof:} }
 }{\hfill$\Box$\end{trivlist}}
\newcounter{saveequationcntr}
\definecolor{gray}{rgb}{0.5, 0.5, 0.5}
\begin{document}

\title{Dual NUP Representations and Min-Maximization in Factor Graphs}

\author{%
   \IEEEauthorblockN{Yun-Peng Li and Hans-Andrea Loeliger}
   \IEEEauthorblockA{Dept. of Information Technology and Electrical Engineering \\
	ETH Zürich Switzerland\\
                     yunpli@isi.ee.ethz.ch, loeliger@isi.ee.ethz.ch}
                   }

\maketitle

\begin{abstract}
Normals with unknown parameters (NUP) 
can be used to convert nontrivial
model-based estimation problems into iterations 
of linear least-squares or Gaussian estimation problems.
In this paper, we extend this approach by augmenting factor graphs 
with convex-dual variables and pertinent NUP representations.
In particular, in a state space setting, we propose
a new iterative forward-backward algorithm that is dual to 
a recently proposed backward-forward algorithm.
\end{abstract}

\section{Introduction}

Variational representations of loss functions 
\cite{WiNag:ir2010,BJMO:osip2012}
and normals with unknown parameters (NUP)
can be used to convert 
nontrivial
model-based estimation problems into iterations 
of linear least-squares or Gaussian estimation problems
\cite{LBHWZ:ITA2016,LMHW:Turbo2018,KeuLg:mpcNUVarxiv2023,KeuLG:TCST2024,Lg:MLSP2023}.
This applies, in particular, 
to linear state space models with non-Gaussian inputs and/or non-Gaussian 
observation noise or constraints.
In general, this approach is not restricted to convex (or log-concave) models.

Using NUP representations of hinge loss functions to enforce half-space constraints 
in state space models was demonstrated in \cite{KeuLg:mpcNUVarxiv2023,KeuLG:TCST2024}.
However, as noted in \cite{YunPengLiLg:AISTATS2024}, 
this approach can have issues (cf.\ Section~\ref{sec:BackgroundExampleIssues}).
In a convex setting with constrained inputs, 
these issues were addressed by a new algorithm in \cite{YunPengLiLg:AISTATS2024}, cf.\ Section~\ref{sec:IBFFD}.
However, this algorithm may fail for constrained outputs.

In this paper, we overcome these limitations
by using NUP representations of loss imposed on convex-dual variables.
Using convex duality in factor graphs \cite{LDHKLK:fgsp2007} was pioneered by
Vontobel \cite{VonTobLoe:fgen2003}
and connected with NUP representations by Wadehn \cite[Chapt.~6]{Wad:Diss2019}.
In this paper, we harness the minimax theorem for convex-concave functions
\cite{Rock:ca,Si:Minimax1958}    
to freely mix primal and dual variables in factor graphs
and to compute quadratic minimax optimization by Gaussian message passing.


In particular, we propose a new iterative forward-backward algorithm (IFFBDD)
that is dual to the algorithm of \cite{YunPengLiLg:AISTATS2024}
and perfectly suitable for 
loss functions
or constraints imposed on output variables 
of state space models.

The paper is structured as follows. 
The relevant background on NUP representations and the algorithm of \cite{YunPengLiLg:AISTATS2024}
is summarized in Section~\ref{sec:BackgroundNUP}.
Section~\ref{sec:DualMinimax} shows how the minimax theorem
enables the use of dual variables in factor graphs,
and how pertinent NUP representations can lead to new algorithms. 
Section~\ref{sec:GaussianMinMax} addresses the implementation of such algorithms 
in terms of Gaussian message passing and presents the new IFFBDD algorithm.
The arXiv version has an appendix with additional material.

We will use factor graphs and the message passing notation 
as in \cite{LDHKLK:fgsp2007}, \cite{LBHWZ:ITA2016}.

\section{Background on NUP Representations}
\label{sec:BackgroundNUP}

\subsection{Basic Idea and Iteratively Reweighted Linear-Gaussian Estimation (IRLGE)}
\label{sec:BasicAM}

Consider a system model with $\R^M$-valued variables $Z_1,\ldots,Z_n,$ 
where the joint posterior probability density of $Z_1,\ldots,Z_n$ has the form
\begin{equation} \label{eqn:GenSystemModel}
f(z_1,\ldots,z_n) e^{-\kappa_1(z_1)}\cdots e^{-\kappa_n(z_n)}
\end{equation}
(as illustrated in \Fig{fig:SysNUP})
and $f(z_1,\ldots,z_n)$ is jointly linear-Gaussian in all arguments.
In this paper, we restrict ourselves to the case where 
$\kappa_1,\ldots,\kappa_n$ are all convex and closed
and admit NUP representations
\begin{equation} \label{eqn:NUPforSys}
e^{-\kappa_\ell(z_\ell)} = \sup_{\theta_\ell} e^{-\frac{1}{2} Q(z_\ell; \theta_\ell)} g_\ell(\theta_\ell),
\end{equation}
where $Q(z_\ell; \theta_\ell)$ is a convex quadratic form with parameter(s) $\theta_\ell$
and $g_{\ell}(\theta_\ell)\geq 0$ is a suitable auxiliary function.
Note that NUP representations of this type effectively coincide with 
variational representations of $\kappa_\ell(z_\ell)$ as in \cite{BJMO:osip2012}
(but there are also other NUP representations \cite{Lg:MLSP2023,KeuLg:mpcNUVarxiv2023}).

Using (\ref{eqn:NUPforSys}), 
the maximizer $z_1,\ldots,z_n$ of (\ref{eqn:GenSystemModel})
(i.e., the joint MAP estimate of $Z_1,\ldots,Z_n$)
can be computed by alternatingly maximizing
\begin{equation} \label{eqn:GenNUP:GlobalFunction}
f(z_1,\ldots,z_n) 
          \prod_{\ell=1}^n e^{-\frac{1}{2} Q(z_\ell; \theta_\ell)} g_\ell(\theta_\ell)
\end{equation}
over $z_1,\ldots,z_n$ and over $\theta_1,\ldots,\theta_n,$
i.e.,
by repeating the following two steps until convergence:
\begin{enumerate}
\item
For fixed $\theta_1,\ldots,\theta_n,$
compute
\begin{equation} \label{eqn:GenNUP:ArgmaxAllX}
\argmax_{z_1,\ldots,z_n} f(z_1,\ldots,z_n) 
    \prod_{\ell=1}^n e^{-\frac{1}{2} Q(z_\ell; \theta_\ell)},
\end{equation}
which is purely linear Gaussian.
\item
For fixed $z_1,\ldots,z_n$, 
determine
\begin{equation} \label{eqn:GenNUP:UpdateTheta}
\argmax_{\theta_\ell} e^{-\frac{1}{2} Q(z_\ell; \theta_\ell)} g_\ell(\theta_\ell)
\end{equation}
for $\ell=1,\ldots,n,$
for which there usually are closed-form analytical expressions,
cf.\ \cite{Lg:MLSP2023}.
\end{enumerate}

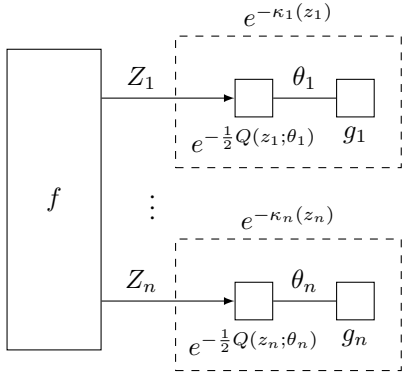
\begin{figure}[t]
\centering
\begin{tikzpicture}[scale=0.09, >=latex]
\tikzset{%
  stdbox/.style = {draw, rectangle, inner sep=0mm, 
                   minimum width=5mm, minimum height=5mm},
  medbox/.style = {draw, rectangle, 
                   minimum width=7.5mm, minimum height=7.5mm},
  blobbox/.style = {draw, fill=black, rectangle, inner sep=0mm, 
                    minimum width=1.75mm, minimum height=1.75mm},
}
\draw(100,0) node[ rectangle, draw, minimum width=12.5mm, minimum height=40mm ] (bigbox) { $f$ };
\draw (bigbox.east)+(22.5,15) node[stdbox, label={below: $e^{-\frac{1}{2}Q(z_1; \theta_1)}$}] (normal1) {};
\draw[->] (bigbox.east)+(0,15) -- node[pos=0.3, above]{$Z_1$} (normal1) ;
\draw (normal1)+(15,0) node[stdbox,label={below: $g_1$}] (g1) {};
\draw (normal1) -- node[above]{$\theta_1$} (g1);
\draw (g1.south east)+(4,-7.5) 
      node[ rectangle, anchor=south east, dashed, draw, minimum width=30mm, minimum height=17.5mm, label={above: $e^{-\kappa_1(z_1)}$} ] 
      (dashbox1) {};
\draw (bigbox.east)+(7.5,0) node {$\vdots$};
\draw (bigbox.east)+(22.5,-15) node[stdbox, label={below: $e^{-\frac{1}{2}Q(z_n; \theta_n)}$}] (normal2) {};
\draw[->] (bigbox.east)+(0,-15) -- node[pos=0.3, above]{$Z_n$} (normal2) ;
\draw (normal2)+(15,0) node[stdbox,label={below: $g_n$}] (g2) {};
\draw (normal2) -- node[above]{$\theta_n$} (g2);
\draw (g2.south east)+(4,-7.5) 
      node[ rectangle, anchor=south east, dashed, draw, minimum width=30mm, minimum height=17.5mm, label={above: $e^{-\kappa_n(z_n)}$} ] 
      (dashbox2) {};
\end{tikzpicture}
\vspace{1ex}
\caption{\label{fig:SysNUP}%
Factor graph of (\ref{eqn:GenSystemModel}) with (\ref{eqn:NUPforSys}).
The arrows serve as a (arbitrary) reference for the message passing notation \cite{LDHKLK:fgsp2007}.}
\noindent
\end{figure}

\subsection{Example and Issues}
\label{sec:BackgroundExampleIssues}

Consider the scalar hinge loss function 
\begin{equation} \label{eqn:hingeUp}
\kappa_\ell(z_\ell) = \mleft\{ \begin{array}{ll}
       0, & \text{if $z_\ell\leq b$} \\
       \beta(z_\ell - b), & \text{if $z_\ell>b$.}
    \end{array}\mright.
\end{equation}
with $\beta>0$,
which can be used (with sufficiently large $\beta$)
to enforce the constraint $Z_\ell \leq b$. 
Let $\msgb{m}{Z_\ell}(\theta_\ell)$ and $\msgb{V}{Z_\ell}(\theta_\ell)$ 
be the mean and the variance, respectively, 
of the Gaussian $e^{-\frac{1}{2}Q(z_\ell; \theta_\ell)}$.
The pertinent update rule (\ref{eqn:GenNUP:UpdateTheta}) yields
\begin{IEEEeqnarray}{r,C,l}
\msgb{m}{Z_\ell} & = & \mleft\{ \begin{array}{ll}
       z_\ell, & \text{if $z_\ell\leq b$} \\
       2b - z_\ell, & \text{if $z_\ell>b$}
    \end{array}\mright. \\
\msgb{V}{Z_\ell} & = & \frac{2|z_\ell - b|}{\beta}
         \label{eqn:HingeNUPVarUpdate}
\end{IEEEeqnarray}
cf.\ \cite{KeuLg:mpcNUVarxiv2023,Lg:MLSP2023}.

While this often works very well \cite{KeuLg:mpcNUVarxiv2023,KeuLG:TCST2024},
enforcing the constraint $Z_\ell \leq b$ may require $\beta$ to be large,
which, by (\ref{eqn:HingeNUPVarUpdate}), slows down the convergence 
of the algorithm of Section~\ref{sec:BasicAM}. 
Morever, the algorithm  may get stuck at $Z_\ell=b$ with $\msgb{V}{Z_\ell}=0$
\cite{YunPengLiLg:AISTATS2024}.

\subsection{Iterated Backward Filtering Forward Deciding (IBFFD)}
\label{sec:IBFFD}

In a state space setting where
\begin{IEEEeqnarray}{r,C,l}
f(z_1,\ldots,z_n) 
& \propto &  \sum_{x_0,\ldots,x_n} f_0(x_0) \prod_{\ell=1}^n f(x_{\ell-1}, z_\ell, x_\ell)
             \IEEEeqnarraynumspace \label{eqn:fSSM}\\
& \propto &  \max_{x_0,\ldots,x_n} f_0(x_0) \prod_{\ell=1}^n f(x_{\ell-1}, z_\ell, x_\ell)
\end{IEEEeqnarray}
with linear-Gaussian factors $f_0,\ldots,f_n$ 
(see \Fig{fig:SysSSM}), 
the algorithm of Section~\ref{sec:BasicAM} may be improved
by the algorithm proposed in \cite{YunPengLiLg:AISTATS2024},
which repeats the following three steps until convergence:
\begin{enumerate}
\item
Backward max-product filtering:
For fixed $\theta_1,\ldots,\theta_n,$
beginning with \mbox{$\msgb{\mu}{X_n}(x_n)=1$},
recursively compute the Gaussian message
\begin{IEEEeqnarray}{r,C,l}
\IEEEeqnarraymulticol{3}{l}{
\msgb{\mu}{X_{\ell-1}}(x_{\ell-1}) 
}\nonumber\\\quad
& \propto &  \max_{x_\ell, z_\ell, \ldots, x_n, z_n}
     \prod_{\nu=\ell}^{n} f_\nu(x_{\nu-1}, z_\nu, x_\nu) e^{-\frac{1}{2}Q(z_\nu; \theta_\nu)} 
      \IEEEeqnarraynumspace \\
& \propto &  \max_{x_\ell, z_\ell} 
             f_\ell(x_{\ell-1}, z_\ell, x_\ell) e^{-\frac{1}{2}Q(z_\ell; \theta_\ell)} 
             \msgb{\mu}{X_\ell}(x_\ell)
             \IEEEeqnarraynumspace 
\end{IEEEeqnarray}
for $\ell=n, n-1, \ldots, 1$.
\item
Forward deciding: 
\mbox{$\hat x_0 = \argmax_{x_0} f_0(x_0) \msgb{\mu}{X_0}(x_0)$}, 
then
recursively compute
\begin{equation} \label{eqn:BFFD:forwardDeciding}
(\hat x_\ell, \hat z_\ell) = \argmax_{x_\ell, z_\ell} 
       f_\ell( \hat x_{\ell-1}, z_\ell, x_\ell) e^{-\kappa_\ell(z_\ell)} \msgb{\mu}{X_\ell}(x_\ell)
\end{equation}
for $\ell=1, 2, \ldots, n$.
\item
For fixed $z_1=\hat z_1,\ldots, z_n=\hat z_n$,
compute $\theta_\ell$ according to (\ref{eqn:GenNUP:UpdateTheta}).
\end{enumerate}

\begin{figure}[t]
\centering
\begin{tikzpicture}[scale=0.09, >=latex]
\tikzset{%
  stdbox/.style = {draw, rectangle, inner sep=0mm, 
                   minimum width=5mm, minimum height=5mm},
  medbox/.style = {draw, rectangle, 
                   minimum width=7.5mm, minimum height=7.5mm},
  blobbox/.style = {draw, fill=black, rectangle, inner sep=0mm, 
                    minimum width=1.75mm, minimum height=1.75mm},
}
\draw (0,0) node[stdbox, label={above: $f_0$}] (f0) {};
\draw (f0)+(17.5,0) node[stdbox, label={above: $f_1$}] (f1) {};
\draw[->] (f0)--node[above]{$X_0$} (f1);
\draw[->] (f1)-- +(0,-12.5) node[right, pos=0.75]{$Z_1$} {};

\draw (f1)+(12.5,0) node (termX1) {};
\draw (f1)--node[above, pos=0.67]{$X_1$} (termX1) {};
\draw (termX1)+(5,0) node {$\cdots$};
\draw (termX1)+(10,0) node (termXn) {};
\draw (termXn)+(12.5,0) node[stdbox, label={above: $f_n$}] (fn) {};
\draw[->] (termXn) -- node[above, pos=0.33]{$X_{n-1}$}(fn) {};
\draw[->] (fn)-- +(0,-12.5) node[right, pos=0.75]{$Z_n$} {};
\draw (fn)+(15,0) node (Xend) {};
\draw[->] (fn) -- node[above]{$X_{n}$} (Xend) {};
\draw (f0)+(-7.5,-6) node[dashed, draw, anchor=south west, 
             minimum width=72.5mm, minimum height=16mm, label={above: $f$}] {};
\end{tikzpicture}
%
\caption{\label{fig:SysSSM}%
Factor graph of (\ref{eqn:fSSM}).}
\noindent
\end{figure}
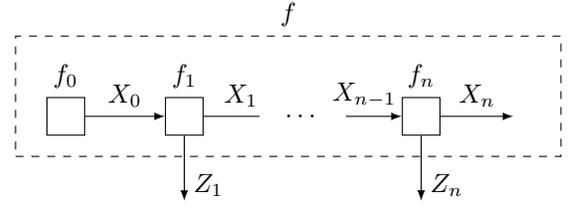

Note that (\ref{eqn:BFFD:forwardDeciding}) does not use the NUP representation.
This algorithm essentially solves the issues mentioned in 
Section~\ref{sec:BackgroundExampleIssues}
if $Z_1,\ldots,Z_n$ 
are independent variables (inputs) of
the state space model \cite{YunPengLiLg:AISTATS2024}.
However, this algorithm may fail if $Z_1,\ldots,Z_n$ 
are dependent variables (outputs);
e.g., if $X_\ell$ and $Z_\ell$ are deterministic functions of $X_{\ell-1}$, 
(\ref{eqn:BFFD:forwardDeciding}) effectively disregards $\kappa_\ell(z_\ell)$.

In this paper, we address and solve this problem by a convex-dual version of 
this algorithm (cf.\ Sections \ref{sec:AbstractFFBD} and~\ref{sec:IFFBDD:GaussianAlg}).

\section{Harnessing Convex Duality and Min-maximization}
\label{sec:DualMinimax}

\subsection{Background on the Legendre Transform}
\label{sec:LegendreTransform}

The Legendre transform \cite{Rock:ca,KoPe:pd2015}
(or the convex conjugate)
of a convex function 
$\kappa\colon \R^M\rightarrow\R \cup \{ \infty \}$ 
is
\begin{equation}
\kappa^\ast(\tilde z) \eqdef \sup_z\mleft( \tilde z^\T z - \kappa(z) \mright),
\end{equation}
which is itself convex and satisfies
\begin{equation} \label{eqn:LegLegendre}
\mleft(\kappa^\ast\mright)^\ast (z) = \kappa(z).
\end{equation}
The Legendre transform of a quadratic form 
$\frac{1}{2}Q(z) = \frac{1}{2}(z-m)^\T W (z-m)$ with positive definite $W$
is
\begin{equation} \label{eqn:Legendre:QuadraticForm}
\frac{1}{2}(z+Wm)^\T W^{-1} (z+Wm) + \text{const}.
\end{equation}

\subsection{System Model with Convex-Dual Variables}
\label{sec:ConvexDual}
Inspired by \cite{VonTobLoe:fgen2003,Forney:nfg2001,AlMao:nfg2011} , 
we now augment the system model of Section~\ref{sec:BackgroundNUP}
with dual variables $\tilde Z_1,\ldots,\tilde Z_n$ 
as in \Fig{fig:SysDuals}, which is supported by the following theorem.

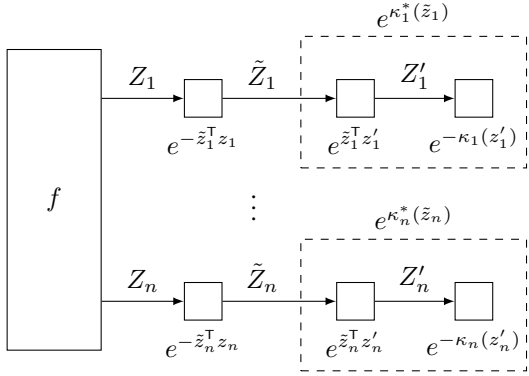
\begin{figure}[t]
\centering
\begin{tikzpicture}[scale=0.09, >=latex]
\tikzset{%
  stdbox/.style = {draw, rectangle, inner sep=0mm, 
                   minimum width=5mm, minimum height=5mm},
  medbox/.style = {draw, rectangle, 
                   minimum width=7.5mm, minimum height=7.5mm},
  blobbox/.style = {draw, fill=black, rectangle, inner sep=0mm, 
                    minimum width=1.75mm, minimum height=1.75mm},
}
\draw(100,0) node[ rectangle, draw, minimum width=12.5mm, minimum height=40mm ] (bigbox) { $f$ };
\draw (bigbox.east)+(15,15) node[stdbox, label={below: $e^{-\tilde z_1^\T z_1}$}] (negdualbox1) {};
\draw[->] (bigbox.east)+(0,15) -- node[above]{$Z_1$} (negdualbox1);
\draw (negdualbox1)+(22.5,0) node[stdbox, label={below: $e^{\tilde z_1^\T z_1'}$}] (dualbox1) {};
\draw[->] (negdualbox1) -- node[pos=0.35, above]{$\tilde Z_1$} (dualbox1) ;
\draw (dualbox1)+(17.5,0) node[stdbox, label={below: $e^{-\kappa_1(z_1')}$}] (kappa1) {};
\draw[->] (dualbox1) -- node[above]{$Z_1'$} (kappa1);
\draw (kappa1.south east)+(5,-7.5) 
      node[ rectangle, anchor=south east, dashed, draw, minimum width=30mm, minimum height=17.5mm, label={above: $e^{\kappa_1^\ast(\tilde z_1)}$} ] 
      (dashbox1) {};
\draw (bigbox.east)+(22.5,0) node {$\vdots$};
\draw (bigbox.east)+(15,-15) node[stdbox, label={below: $e^{-\tilde z_n^\T z_n}$}] (negdualbox2) {};
\draw[->] (bigbox.east)+(0,-15) -- node[above]{$Z_n$} (negdualbox2);
\draw (negdualbox2)+(22.5,0) node[stdbox, label={below: $e^{\tilde z_n^\T z_n'}$}] (dualbox2) {};
\draw[->] (negdualbox2) -- node[pos=0.35, above]{$\tilde Z_n$} (dualbox2) ;
\draw (dualbox2)+(17.5,0) node[stdbox, label={below: $e^{-\kappa_n(z_n')}$}] (kappa2) {};
\draw[->] (dualbox2) -- node[above]{$Z_n'$} (kappa2);
\draw (kappa2.south east)+(5,-7.5) 
      node[ rectangle, anchor=south east, dashed, draw, minimum width=30mm, minimum height=17.5mm, label={above: $e^{\kappa_n^\ast(\tilde z_n)}$} ] 
      (dashbox2) {};
\end{tikzpicture}
\vspace{1ex}
\caption{\label{fig:SysDuals}%
\Fig{fig:SysNUP} augmented with convex-dual variables $\tilde Z_1,\ldots,\tilde Z_n$.}
\noindent
\end{figure}

\begin{theorem}\label{theorem:thetheorem}
Let $f(z_1,\ldots,z_n)$ be nonnegative 
and log-concave, 
and let $\kappa_1(z_1),\ldots, \kappa_n(z_n)$ be convex.
Then
\begin{IEEEeqnarray}{r,C,l}
\IEEEeqnarraymulticol{3}{l}{
\max_{z_1,\ldots,z_n}
      f(z_1,\ldots,z_n) e^{-\kappa_1(z_1)}\cdots e^{-\kappa_n(z_n)}
}\nonumber\\\quad
& = & 
\max_{z_1,\ldots,z_n}
\max_{z_1',\ldots,z_n'}
\inf_{\tilde z_1, \ldots, \tilde z_n}
f(z_1,\ldots,z_n) 
 \IEEEeqnarraynumspace \nonumber\\ 
&& \cdot\,
  e^{-\tilde z_1^\T z_1} e^{\tilde z_1^\T z_1'} e^{-\kappa_1(z_1')}
  \cdots
  e^{-\tilde z_n^\T z_n} e^{\tilde z_n^\T z_n'} e^{-\kappa_n(z_n')}.
      \IEEEeqnarraynumspace
      \label{eqn:FGMinmaxTheorem}
\end{IEEEeqnarray}
Moreover, (\ref{eqn:FGMinmaxTheorem}) holds also if 
the maximizations and minimizations on the right-hand side
are rearranged into any order.
\end{theorem}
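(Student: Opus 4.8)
The plan is to prove the identity in two stages: first establish the inner identity for a single pair $(z_\ell, z_\ell')$ with its dual $\tilde z_\ell$, then assemble the global statement and justify the reordering of the optimizations. For the core building block, observe that for fixed $z_\ell'$, the convex conjugate identity $\kappa_\ell^\ast(\tilde z_\ell) = \sup_{z_\ell}\bigl(\tilde z_\ell^\T z_\ell - \kappa_\ell(z_\ell)\bigr)$ together with the involution $(\kappa_\ell^\ast)^\ast = \kappa_\ell$ (equation~(\ref{eqn:LegLegendre})) gives
\begin{equation}
\kappa_\ell(z_\ell') = \sup_{\tilde z_\ell}\mleft(\tilde z_\ell^\T z_\ell' - \kappa_\ell^\ast(\tilde z_\ell)\mright),
\end{equation}
hence $e^{-\kappa_\ell(z_\ell')} = \inf_{\tilde z_\ell} e^{-\tilde z_\ell^\T z_\ell' + \kappa_\ell^\ast(\tilde z_\ell)}$. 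But the right-hand side of (\ref{eqn:FGMinmaxTheorem}) contains the product $e^{-\tilde z_\ell^\T z_\ell} e^{\tilde z_\ell^\T z_\ell'} e^{-\kappa_\ell(z_\ell')}$ rather than a factor involving $\kappa_\ell^\ast$; the first step is therefore to check that, after taking $\max_{z_\ell'}$, this product collapses correctly. Indeed $\max_{z_\ell'} e^{\tilde z_\ell^\T z_\ell'} e^{-\kappa_\ell(z_\ell')} = e^{\kappa_\ell^\ast(\tilde z_\ell)}$ by definition of the conjugate, so for fixed $z_\ell$
\begin{equation}
\inf_{\tilde z_\ell}\max_{z_\ell'} e^{-\tilde z_\ell^\T z_\ell} e^{\tilde z_\ell^\T z_\ell'} e^{-\kappa_\ell(z_\ell')} = \inf_{\tilde z_\ell} e^{-\tilde z_\ell^\T z_\ell + \kappa_\ell^\ast(\tilde z_\ell)} = e^{-\kappa_\ell(z_\ell')\big|_{z_\ell' = z_\ell}} = e^{-\kappa_\ell(z_\ell)},
\end{equation}
using the involution again. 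This reconciles the two sides once we are allowed to pull $\max_{z_\ell'}$ inside past $\inf_{\tilde z_\ell}$, which is exactly the minimax-exchange point.

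Next I would globalize. Collecting all $n$ factors, the right-hand side of (\ref{eqn:FGMinmaxTheorem}) with the optimizations in the order written is
\begin{equation}
\max_{z_1,\ldots,z_n}\;\max_{z_1',\ldots,z_n'}\;\inf_{\tilde z_1,\ldots,\tilde z_n}\; f(z_1,\ldots,z_n)\prod_{\ell=1}^n e^{-\tilde z_\ell^\T z_\ell}\, e^{\tilde z_\ell^\T z_\ell'}\, e^{-\kappa_\ell(z_\ell')}.
\end{equation}
Because $f$ does not depend on $z_\ell'$ or $\tilde z_\ell$ and the $\ell$-th triple of exponential factors involves only $(z_\ell, z_\ell', \tilde z_\ell)$, the inner $\inf$ over $(\tilde z_1,\ldots,\tilde z_n)$ and $\max$ over $(z_1',\ldots,z_n')$ both separate into a product of $n$ independent scalar problems (separately in each $\ell$), each handled by the single-pair computation above. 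Pinning down in which order these two inner operations may be swapped is the crux: I would invoke the minimax theorem for convex-concave functions (\cite{Rock:ca,Si:Minimax1958}) on the bilinear-plus-convex objective — for each $\ell$ the function $(z_\ell', \tilde z_\ell) \mapsto -\tilde z_\ell^\T z_\ell + \tilde z_\ell^\T z_\ell' - \kappa_\ell(z_\ell')$ is concave (affine) in $\tilde z_\ell$ and concave in $z_\ell'$, so working with its negative (or after the standard sign flip to a convex-concave saddle function) Sion's minimax theorem applies, giving $\inf_{\tilde z_\ell}\sup_{z_\ell'} = \sup_{z_\ell'}\inf_{\tilde z_\ell}$. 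One must also confirm the monotone transformation $x \mapsto e^{-x}$ (and $f \ge 0$, log-concave) does not disturb the exchange — it is monotone and the outer $\max_{z_1,\ldots,z_n}$ of a nonnegative log-concave factor times a product of per-$\ell$ optimal values is unproblematic.

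For the final clause — that the identity holds with the maximizations and minimizations rearranged into any order — I would argue that the only non-trivial swaps are those moving an $\inf_{\tilde z_\ell}$ past a $\max_{z_\ell'}$ (or past $\max_{z_\ell}$), since any two $\max$'s commute freely, any two $\inf$'s commute freely, and variables with disjoint index $\ell$ decouple entirely. Swapping $\inf_{\tilde z_\ell}$ with $\max_{z_\ell}$ is harmless because $e^{-\tilde z_\ell^\T z_\ell}$ is the only coupling and, jointly with $f$ log-concave in $z_\ell$, the objective is again concave in $z_\ell$ and affine (hence concave) in $\tilde z_\ell$, so Sion's theorem applies verbatim; swapping $\inf_{\tilde z_\ell}$ with $\max_{z_\ell'}$ is the case already handled. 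Thus every admissible reordering is reached by a sequence of justified pairwise swaps, each leaving the value unchanged. The main obstacle I anticipate is making the minimax exchange fully rigorous: Sion's theorem needs a compactness (or coercivity) hypothesis on one of the variables, so I would either note that $\kappa_\ell$ convex and closed makes $\tilde z_\ell \mapsto \kappa_\ell^\ast(\tilde z_\ell)$ coercive on the relevant effective domain (equivalently restrict attention to the cases where the suprema/infima are attained), or invoke the standard extended-real-valued form of the minimax theorem that tolerates $\pm\infty$ values — after which all the algebra above goes through cleanly.
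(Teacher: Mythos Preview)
Your proposal is correct in outline but takes a more roundabout path than the paper, and in doing so you overlook a one-line computation that makes the base identity elementary.

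The paper does not touch Legendre transforms or biconjugation to establish (\ref{eqn:FGMinmaxTheorem}) in the stated order $\max_{z}\max_{z'}\inf_{\tilde z}$. It simply evaluates the innermost infimum directly:
\[
\inf_{\tilde z_\ell}\, e^{-\tilde z_\ell^\T z_\ell}\,e^{\tilde z_\ell^\T z_\ell'}
= \inf_{\tilde z_\ell}\, e^{\tilde z_\ell^\T(z_\ell'-z_\ell)}
= \begin{cases} 1, & z_\ell = z_\ell',\\ 0, & z_\ell \neq z_\ell'. \end{cases}
\]
This acts as an indicator forcing $z_\ell'=z_\ell$, so the subsequent $\max_{z_\ell'}$ collapses and the right-hand side becomes exactly $\max_{z} f(z_1,\ldots,z_n)\prod_\ell e^{-\kappa_\ell(z_\ell)}$. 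No convex conjugates, no minimax, no compactness worries --- the base identity is free.

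Your route computes instead $\inf_{\tilde z_\ell}\max_{z_\ell'}$ via $\kappa_\ell^\ast$ and biconjugation, which is the \emph{wrong order} for the statement as written; you then have to invoke Sion already at this stage to swap back to $\max_{z_\ell'}\inf_{\tilde z_\ell}$. That is valid, but it front-loads the minimax theorem (and the compactness/coercivity caveats you rightly flag) into a place where it is not needed. What your approach does buy is that it makes the connection to $\kappa_\ell^\ast$ explicit from the start, which is thematically nice given where the paper goes next; the paper itself reserves that computation for the illustrative $n=1$ discussion of (\ref{eqn:MaxInfMax}) and (\ref{eqn:InfMaxMax}).

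For the reordering clause, both you and the paper invoke the minimax theorem for convex-concave functions applied to the logarithm of the objective; your more detailed accounting of which pairwise swaps are nontrivial is a reasonable elaboration of what the paper states in one sentence.
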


\begin{proof}
The validity of (\ref{eqn:FGMinmaxTheorem}) follows from
\begin{IEEEeqnarray}{r,C,l}
\inf_{\tilde z_\ell} e^{-\tilde z_\ell^\T z_\ell} e^{\tilde z_\ell^\T z_\ell'}
& = &  \inf_{\tilde z_\ell} e^{\tilde z_\ell^\T (z_\ell' - z_\ell)} \\
& = &  \mleft\{ \begin{array}{ll}
        1, & \text{if $z_\ell = z_\ell'$} \\
        0, & \text{if $z_\ell \neq z_\ell'$.}
       \end{array}\mright.
       \IEEEeqnarraynumspace \label{eqn:MaxDelta}
\end{IEEEeqnarray}
The validity of rearranging
the maximizations and minimizations 
on the right-hand side of (\ref{eqn:FGMinmaxTheorem})
follows from the minimax theorem for convex-concave functions \cite{Si:Minimax1958}, 
applied to the logarithm of~(\ref{eqn:FGMinmaxTheorem}).
\end{proof}

It is instructive to examine 
Theorem~\ref{theorem:thetheorem}
for the special case $n=1$.
Dropping the index ``1'' in $z_1$ etc.\
and writing $f(z)$ as $f(z) = e^{-\phi(z)}$,
the theorem claims, in particular, that
\begin{IEEEeqnarray}{r,C,l}
\IEEEeqnarraymulticol{3}{l}{
\max_z e^{-\phi(z)} e^{-\kappa(z)}
}\nonumber\\\quad
& = & 
  \max_z \max_{z'} \inf_{\tilde z}
  e^{-\phi(z)} e^{-\tilde z^\T z} e^{\tilde z^\T z'} e^{-\kappa(z')}
  \IEEEeqnarraynumspace \label{eqn:MaxMaxInf}\\
& = &  
  \max_{z} \inf_{\tilde z} \max_{z'} 
  e^{-\phi(z)} e^{-\tilde z^\T z} e^{\tilde z^\T z'} e^{-\kappa(z')}
  \IEEEeqnarraynumspace \label{eqn:MaxInfMax}\\
& = & 
  \inf_{\tilde z} \max_z \max_{z'} 
  e^{-\phi(z)} e^{-\tilde z^\T z} e^{\tilde z^\T z'} e^{-\kappa(z')}.
  \IEEEeqnarraynumspace \label{eqn:InfMaxMax}
\end{IEEEeqnarray}
These equations
can be verified without invoking the minimax theorem:
(\ref{eqn:MaxMaxInf}) follows from (\ref{eqn:MaxDelta});
(\ref{eqn:MaxInfMax}) follows (with some easy manipulations) from 
(\ref{eqn:LegLegendre});
and (\ref{eqn:InfMaxMax}) amounts to
\begin{equation}
- \inf_{z} \mleft( \phi(z) + \kappa(z) \rule{0em}{2ex}\mright)
= \inf_{\tilde z} \mleft( \phi^\ast(-\tilde z) + \kappa^\ast(\tilde z) \rule{0em}{2ex}\mright),
\end{equation}
which is Fenchel's duality theorem \cite{Rock:ca}.

\subsection{Working with Dual NUP Representations}
\label{sec:BasicIdeaDualNUP}

\begin{figure}[t]
\centering
\begin{tikzpicture}[scale=0.09, >=latex]
\tikzset{%
  stdbox/.style = {draw, rectangle, inner sep=0mm, 
                   minimum width=5mm, minimum height=5mm},
  medbox/.style = {draw, rectangle, 
                   minimum width=7.5mm, minimum height=7.5mm},
  blobbox/.style = {draw, fill=black, rectangle, inner sep=0mm, 
                    minimum width=1.75mm, minimum height=1.75mm},
}
\draw(100,0) node[ rectangle, draw, minimum width=12.5mm, minimum height=40mm ] (bigbox) { $f$ };
\draw(bigbox.south west)+(-5,-5)
     node[ rectangle, anchor=south west, dashed, draw, minimum width=37.5mm, minimum height=50mm,
           label={above: $\tilde f(\tilde z_1, \ldots, \tilde z_n)$} ] {};
\draw (bigbox.east)+(15,15) node[stdbox, label={below: $e^{-\tilde z_1^\T z_1}$}] (negdualbox1) {};
\draw[->] (bigbox.east)+(0,15) -- node[above]{$Z_1$} (negdualbox1);
\draw (negdualbox1)+(30,0) node[stdbox, label={below: $e^{\frac{1}{2}Q(\tilde z_1; \tilde\theta_1)}$}] (normal1) {};
\draw[->] (negdualbox1) -- node[pos=0.45, above]{$\tilde Z_1$} (normal1) ;
\draw (normal1)+(15,0) node[stdbox,label={below: $1/\tilde g_1$}] (g1) {};
\draw (normal1) -- node[above]{$\tilde\theta_1$} (g1);
\draw (g1.south east)+(4,-7.5) 
      node[ rectangle, anchor=south east, dashed, draw, minimum width=29mm, minimum height=17.5mm, label={above: $e^{\kappa_1^\ast(\tilde z_1)}$} ] 
      (dashbox1) {};
\draw (bigbox.east)+(28,0) node {$\vdots$};
\draw (bigbox.east)+(15,-15) node[stdbox, label={below: $e^{-\tilde z_n^\T z_n}$}] (negdualbox2) {};
\draw[->] (bigbox.east)+(0,-15) -- node[above]{$Z_n$} (negdualbox2);
\draw (negdualbox2)+(30,0) node[stdbox, label={below: $e^{\frac{1}{2}Q(\tilde z_n; \tilde\theta_n)}$}] (normal2) {};
\draw[->] (negdualbox2) -- node[pos=0.45, above]{$\tilde Z_n$} (normal2) ;
\draw (normal2)+(15,0) node[stdbox,label={below: $1/\tilde g_n$}] (g2) {};
\draw (normal2) -- node[above]{$\tilde\theta_n$} (g2);
\draw (g2.south east)+(4,-7.5) 
      node[ rectangle, anchor=south east, dashed, draw, minimum width=29mm, minimum height=17.5mm, label={above: $e^{\kappa_n^\ast(\tilde z_n)}$} ] 
      (dashbox2) {};
\end{tikzpicture}
\vspace{1ex}
\caption{\label{fig:SysDualsNUP}%
\Fig{fig:SysDuals} with dashed boxes replaced by dual NUP representations~(\ref{eqn:dualMinNUP}).}
\noindent
\end{figure}
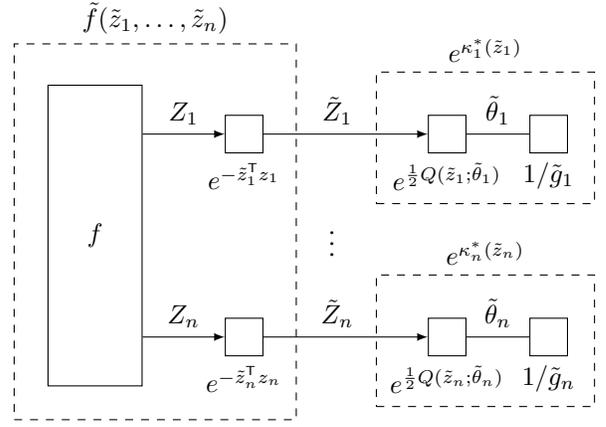

We now propose to work with NUP representions of $\kappa_\ell^\ast(\tilde z_\ell)$,
viz.,
\begin{equation}  \label{eqn:dualMaxNUP}
e^{-\kappa_\ell^\ast(\tilde z_\ell)} 
 = \sup_{\tilde\theta_\ell}
   e^{-\frac{1}{2}Q(\tilde z_\ell; \tilde\theta_\ell)} \tilde g_\ell(\tilde\theta_\ell)
\end{equation}
or, equivalently,
\begin{equation}  \label{eqn:dualMinNUP}
e^{\kappa_\ell^\ast(\tilde z_\ell)} 
 = \inf_{\tilde\theta_\ell}
   e^{\frac{1}{2}Q(\tilde z_\ell; \tilde\theta_\ell)} /\tilde g_\ell(\tilde\theta_\ell),
\end{equation}
as illustrated in \Fig{fig:SysDualsNUP}.
(An example will be given in Section~\ref{sec:HalfSpaceDualized}.)
Let
\begin{equation}
\tilde f(\tilde z_1,\ldots, \tilde z_n)
\eqdef \max_{z_1, \ldots, z_n} f(z_1,\ldots,z_n)
       \prod_{\ell=1}^n e^{-\tilde z_\ell^\T z_\ell}.
\end{equation}
Using the augmented system model of \Fig{fig:SysDualsNUP} and Theorem~\ref{theorem:thetheorem},
the maximizer of (\ref{eqn:GenSystemModel})
can be computed by alternatingly minimizing
\begin{equation} \label{eqn:DualGenNUP:GlobalFunction}
\tilde f(\tilde z_1,\ldots, \tilde z_n) 
          \prod_{\ell=1}^n e^{\frac{1}{2} Q(\tilde z_\ell; \tilde \theta_\ell)} / \tilde g_\ell(\tilde \theta_\ell)
\end{equation}
over $\tilde z_1,\ldots, \tilde z_n$ and over $\tilde \theta_1,\ldots,\tilde\theta_n,$
i.e.,
by repeating the following two steps until convergence:
\begin{enumerate}
\item
For fixed $\tilde\theta_1,\ldots,\tilde\theta_n,$
compute
\begin{equation} \label{eqn:DualGenNUP:ArgmaxAllX}
\argmin_{\tilde z_1,\ldots, \tilde z_n} \tilde f(\tilde z_1,\ldots, \tilde z_n) 
    \prod_{\ell=1}^n e^{\frac{1}{2} Q(\tilde z_\ell; \tilde\theta_\ell)},
\end{equation}
which is effectively again linear Gaussian,
as will be discussed in Section~\ref{sec:GaussianMinMax}.
\item
For fixed $\tilde z_1,\ldots, \tilde z_n$, 
minimizing (\ref{eqn:DualGenNUP:GlobalFunction}) splits into 
\begin{equation} \label{eqn:DualGenNUP:UpdateTheta}
\argmax_{\tilde\theta_\ell} e^{-\frac{1}{2} Q(\tilde z_\ell; \tilde \theta_\ell)} \tilde g_\ell(\tilde\theta_\ell)
\end{equation}
for $\ell=1,\ldots,n,$
for which there usually are closed-form analytical expressions.
\end{enumerate}
After convergence, the maximizers $z_1,\ldots,z_n$ of (\ref{eqn:GenSystemModel})
are easily obtained from the minimizers  $\tilde z_1,\ldots, \tilde z_n$ of (\ref{eqn:DualGenNUP:GlobalFunction}),
cf.\ Section~\ref{sec:GaussianMinMax}.

\subsection{Iterated Forward Filtering Backward Dual Deciding}
\label{sec:AbstractFFBD}

In a state space setting as in Section~\ref{sec:IBFFD},
we propose the following new algorithm (IFFBDD),
which repeats the following three steps until convergence:
\begin{enumerate}
\item
For fixed $\tilde\theta_1,\ldots,\tilde\theta_n,$
beginning with $\msgf{\mu}{X_0}(x_0)=f(x_0)$,
recursively compute the Gaussian message
\begin{IEEEeqnarray}{r,C,l}
\msgf{\mu}{X_\ell}(x_\ell) 
& \propto &  \max_{x_{\ell-1}, z_\ell} \msgf{\mu}{X_{\ell-1}}(x_{\ell-1}) \nonumber\\
   &&  f_\ell(x_{\ell-1}, z_\ell, x_\ell) 
             \min_{\tilde z_\ell} e^{-\tilde z_\ell^\T z_\ell}
             e^{\frac{1}{2}Q(\tilde z_\ell; \tilde \theta_\ell)} 
             \IEEEeqnarraynumspace \label{eqn:AbstractFFBD:forward}
\end{IEEEeqnarray}
for $\ell=1, 2, \ldots, n$.
\item
Beginning with 
\mbox{$\hat{\tilde x}_n = 0$},
recursively compute 
\begin{IEEEeqnarray}{r,C,l}
(\hat{\tilde x}_{\ell-1}, \hat{\tilde z}_\ell)
& = & \argmin_{\tilde x_{\ell-1}, \tilde z_\ell}  \max_{x_{\ell-1}}
       \msgf{\mu}{X_{\ell-1}}(x_{\ell-1}) 
       e^{-\tilde x_{\ell-1}^\T x_{\ell-1}} \nonumber\\
&& 
       \tilde f_\ell(\tilde x_{\ell-1}, \tilde z_\ell, \hat{\tilde x}_\ell)
        e^{\kappa_\ell^\ast(\tilde z_\ell)}
     \IEEEeqnarraynumspace  \label{eqn:AbstractFFBD:backward}
\end{IEEEeqnarray}
for $\ell=n, \ldots,1$, where
\begin{IEEEeqnarray}{r,C,l}
\tilde f_\ell(\tilde x_{\ell-1}, \tilde z_\ell, \tilde x_\ell)
& = &
  \max_{x_{\ell-1}, z_\ell, x_\ell} f_\ell(x_{\ell-1}, z_\ell, x_\ell) \nonumber\\
 && e^{\tilde x_{\ell-1}^\T x_{\ell-1}}
    e^{-\tilde z_\ell^\T z_\ell}
    e^{-\tilde x_\ell^\T x_\ell}.
    \IEEEeqnarraynumspace
\end{IEEEeqnarray}
\item
For fixed $\tilde z_\ell=\hat{\tilde z}_\ell$, 
update $\tilde\theta_\ell$ according to~(\ref{eqn:DualGenNUP:UpdateTheta}), for all $\ell$.
\end{enumerate}

After convergence, 
the maximizers $\hat x_\ell$ and $\hat z_\ell$ 
are easily obtained from $\hat{\tilde x}_\ell$ and $\hat{\tilde z}_\ell$,
	e.g., by (\ref{eqn:FromDualMarginalToPrimalMarginal}).

A fleshed-out version of this algorithm is given in Section~\ref{sec:IFFBDD:GaussianAlg}.

\subsection{Example of Dual NUP: Half-Space Constraint}
\label{sec:HalfSpaceDualized}

Suppose we wish to enforce a scalar constraint $Z_\ell \leq b$.
Instead of using (\ref{eqn:hingeUp}) with sufficiently large $\beta$,
we directly define 
\begin{equation} \label{eqn:LeftHalfSpace}
\kappa_\ell(z_\ell) = \mleft\{ \begin{array}{ll}
      0, & \text{if $z_\ell \leq b$} \\
      +\infty, & \text{if $z_\ell >b$.}
    \end{array}\mright.
\end{equation}
The convex dual of (\ref{eqn:LeftHalfSpace}) is
\begin{equation} \label{eqn:DualLeftHalfSpace}
\kappa_\ell^\ast(\tilde z_\ell) = 
\mleft\{ \begin{array}{ll}
      +\infty, & \text{if $\tilde z_\ell < 0$} \\
      b\tilde z_\ell, & \text{if $\tilde z_\ell \geq 0$.}
    \end{array}\mright.
\end{equation}
In order to derive a NUP representation, we begin with proxy
\begin{equation} \label{eqn:DualLeftHalfSpaceSloped}
\breve\kappa_\ell(\tilde z_\ell) \eqdef 
\mleft\{ \begin{array}{ll}
      (b-\gamma_{\ell}) \tilde z_\ell, & \text{if $\tilde z_\ell < 0$} \\
      b\tilde z_\ell, & \text{if $\tilde z_\ell \geq 0$,}
    \end{array}\mright.
\end{equation}
which will enforce $\tilde Z_\ell \geq 0$ 
(and effectively agree with (\ref{eqn:DualLeftHalfSpace})) 
for sufficiently large positive $\gamma_{\ell}$. The (logarithm of) the NUP representation of (\ref{eqn:DualLeftHalfSpaceSloped}) is
\begin{equation}
\breve\kappa_\ell(\tilde z_\ell)
= \inf_{\sigma^2\geq 0} 
  \mleft(
  \frac{\big( \tilde z_\ell + (b-\gamma_{\ell}/2)\sigma^2 \big)^2}{2\sigma^2}
  -\frac{b(b-\gamma_{\ell})\sigma^2}{2}
  \mright)
\end{equation}
with parameter $\tilde\theta_\ell = \sigma^2$.
For fixed $\tilde z_\ell$, the maximization 
over $\sigma^2$ 
as in (\ref{eqn:DualGenNUP:UpdateTheta})
results in 
\begin{equation} \label{eqn:Zleqb:UpdateMsgb}
\msgb{m}{\tilde Z_\ell} = \frac{(\gamma_{\ell}-2b)|\tilde z_\ell|}{\gamma_{\ell}}
\text{~~and~~}
\msgb{V}{\tilde Z_\ell} = \frac{2|\tilde z_\ell|}{\gamma_{\ell}}.
\end{equation}
For the computation of  (\ref{eqn:AbstractFFBD:backward}), 
we also need 
\begin{IEEEeqnarray}{r,C,l}
\hat{\tilde z}_\ell 
& = & \argmin_{z_\ell} 
    \msgf{\mu}{\tilde Z_\ell}(\tilde z_\ell) 
    e^{\breve\kappa_\ell(\tilde z_\ell)}
    \IEEEeqnarraynumspace\\
& = &  \mleft\{ \begin{array}{ll}
      \msgf{m}{\tilde Z_\ell} - (b-\gamma_{\ell})\msgf{V}{\tilde Z_\ell}, & \text{if $\msgf{m}{\tilde Z_\ell} < (b-\gamma_{\ell}) \msgf{V}{\tilde Z_\ell}$} \\
      \multicolumn{2}{l}{
      0,  \hspace{6em}\text{if $(b-\gamma_{\ell}) \msgf{V}{\tilde Z_\ell} \leq \msgf{m}{\tilde Z_\ell} \leq b\msgf{V}{\tilde Z_\ell}$}
      } \\
      \msgf{m}{\tilde Z_\ell} - b\msgf{V}{\tilde Z_\ell}, &  \text{if $\msgf{m}{\tilde Z_\ell} > b \msgf{V}{\tilde Z_\ell}$.}
       \end{array}\mright.
     \IEEEeqnarraynumspace \label{eqn:Zlegb:DualDecision}
\end{IEEEeqnarray}
If (\ref{eqn:Zlegb:DualDecision}) yields $\hat{\tilde z}_\ell <0$, the constraint $\tilde Z_\ell \ge 0$ can be enforced by increasing $\gamma_{\ell}$ to
\begin{equation} \label{eqn:Zleqb:Gamma}
\gamma_{\ell} = b - \msgf{V}{\tilde Z_\ell}^{-1}\msgf{m}{\tilde Z_\ell}
\end{equation}
and recomputing (\ref{eqn:Zlegb:DualDecision}) .

The constraint $Z_\ell \geq a$ can be handled analogously. In result,
(\ref{eqn:Zleqb:UpdateMsgb}) is replaced by
\begin{equation} \label{eqn:Zgeqb:UpdateMsgb}
\msgb{m}{\tilde Z_\ell} = -\frac{(\gamma_{\ell}+2a)|\tilde z_\ell|}{\gamma_{\ell}}
\text{~~and~~}
\msgb{V}{\tilde Z_\ell} = \frac{2|\tilde z_\ell|}{\gamma_{\ell}},
\end{equation}
(\ref{eqn:Zlegb:DualDecision}) is replaced by
\begin{IEEEeqnarray}{r,C,l}
\hat{\tilde z}_\ell 
& = &  \mleft\{ \begin{array}{ll}
      \msgf{m}{\tilde Z_\ell} - a\msgf{V}{\tilde Z_\ell}, & \text{if $\msgf{m}{\tilde Z_\ell} < a \msgf{V}{\tilde Z_\ell}$} \\
      \multicolumn{2}{l}{
      0,  \hspace{5em}\text{if $a \msgf{V}{\tilde Z_\ell} \leq \msgf{m}{\tilde Z_\ell} \leq (a+\gamma_{\ell}) \msgf{V}{\tilde Z_\ell}$}
      } \\
      \msgf{m}{\tilde Z_\ell} - (a+\gamma_{\ell})\msgf{V}{\tilde Z_\ell}, &  \text{if $\msgf{m}{\tilde Z_\ell} > (a+\gamma_{\ell}) \msgf{V}{\tilde Z_\ell},$}
       \end{array}\mright.
     \IEEEeqnarraynumspace \label{eqn:Zgegb:DualDecision}
\end{IEEEeqnarray}
and (\ref{eqn:Zleqb:Gamma}) is replaced by
\begin{equation} \label{eqn:Zgeqb:Gamma}
\gamma_{\ell} = \msgf{V}{\tilde Z_\ell}^{-1}\msgf{m}{\tilde Z_\ell}- a.
\end{equation}

It is noteworthy that (\ref{eqn:Zleqb:UpdateMsgb}) (and likewise (\ref{eqn:Zgeqb:UpdateMsgb})) can be used also with $\gamma_{\ell}=+\infty$, in which case $\gamma_{\ell}$ need never be updated and the proxy $\breve{\kappa}_{\ell}(\tilde z_{\ell})$ is an exact representation of $\kappa_\ell^\ast(\tilde z_\ell) $.


\section{Min-Maximization by Gaussian Message Passing}
\label{sec:GaussianMinMax}

\subsection{Gaussian Message Passing with Primal and Dual Variables}

Gaussian message passing algorithms in linear-Gaussian models 
(including, in particular, Kalman filters and smoothers) 
can conveniently be 
put together
from tabulated message computations rules
for the building blocks of such models \cite{LDHKLK:fgsp2007}. 
Updated versions of such tables (with several improvements) were given in \cite[Appendix~A]{LBHWZ:ITA2016}.

We now claim that these same tabulated computations can 
effectively handle also the computations 
in (\ref{eqn:DualGenNUP:ArgmaxAllX}) and (\ref{eqn:AbstractFFBD:forward}).
Due to space constraints, we can here discuss this only in outline.
As is well known,
sum-product message passing in linear-Gaussian models coincides with 
max-product message passing, up to a scale factor \cite{LDHKLK:fgsp2007}. 
Disregarding the probabilistic interpretation,
linear-Gaussian message passing may thus be viewed as maximizing 
concave quadratic forms subject to linear constraints,
which is isomorphic to minimizing convex quadratic forms (subject to the same linear constraints).

In the mentioned tabulated computations \cite{LDHKLK:fgsp2007},\cite{LBHWZ:ITA2016}, 
quadratic forms 
\begin{equation}
Q(z) = (z-m)^\T W (z-m)
\end{equation}
and the associated Gaussian probability mass functions
$p(z) \propto e^{-\frac{1}{2}Q(z)}$
are parameterized either by the mean $m$ and the covariance matrix $V=W^{-1}$
or by the precision matrix $W$ and $\xi = Wm$.

It follows from (\ref{eqn:Legendre:QuadraticForm}) that
these two parameterizations change place under dualization:
for a generic variable $Z$ 
(e.g., $Z_\ell$ in 
\Fig{fig:SysDualsNUP} or $X_\ell$ in \Fig{fig:SysSSM}), 
the parameters of Gaussian messages satisfy
\begin{IEEEeqnarray}{r,C,l,C,r,C,l}
\msgf{m}{\tilde Z} & = & \msgf{\xi}{Z} & \text{~and~} & \msgf{V}{\tilde Z} & = & \msgf{W}{Z}\phantom{.} 
      \label{eqn:msgfVdXfromX}\\
\msgf{\xi}{\tilde Z} & = & \msgf{m}{Z} & \text{~and~} & \msgf{W}{\tilde Z} & = & \msgf{V}{Z}\phantom{.}\\
\msgb{m}{\tilde Z} & = & -\msgb{\xi}{Z} & \text{~and~} & \msgb{V}{\tilde Z} & = & \msgb{W}{Z}\phantom{.} 
        \label{eqn:msgbVdXfromX}\\
\msgb{\xi}{\tilde Z} & = & -\msgb{m}{Z} & \text{~and~} & \msgb{W}{\tilde Z} & = & \msgb{V}{Z}.
\end{IEEEeqnarray}
Moreover,
with
\begin{IEEEeqnarray}{r,C,l}
\tilde W_Z  & \eqdef &  \big( \msgf{V}{Z} + \msgb{V}{Z} \big)^{-1} \\
\tilde\xi_Z  & \eqdef & \tilde W_Z \big( \msgf{m}{Z} - \msgb{m}{Z} \big),
\end{IEEEeqnarray}
the parameters of the primal marginal $\msgf{\mu}{Z}(z)\msgb{\mu}{Z}(z)$
and the dual marginal $\msgf{\mu}{\tilde Z}(\tilde z)\msgb{\mu}{\tilde Z}(\tilde z)$ are related by
\begin{IEEEeqnarray}{r,C,l,C,r,C,l}
m_{\tilde Z} & = & \tilde\xi_{Z} & \text{~and~} & V_{\tilde Z} & = & \tilde W_{Z}.
   \label{eqn:mVdXfromX}
\end{IEEEeqnarray}
In this way, (\ref{eqn:msgfVdXfromX})--(\ref{eqn:mVdXfromX}) 
add meaning and interpretability to the obvious dualities in 
the tables in \cite{LDHKLK:fgsp2007} and \cite[Appendix~A]{LBHWZ:ITA2016}.
Moreover, we have
\begin{equation}
	\label{eqn:FromDualMarginalToPrimalMarginal}
	\hat{z}_{\ell}\,=\,\msgf{m}{Z_{\ell}}-\msgf{V}{Z_{\ell}}\hat{\tilde{z}}_{\ell} \,=\, \msgb{m}{Z_{\ell}}+\msgb{V}{Z_{\ell}}\hat{\tilde{z}}_{\ell} 
\end{equation}
from Table~4 of \cite{LBHWZ:ITA2016}.


\subsection{IFFBDD by Gaussian Message Passing}
\label{sec:IFFBDD:GaussianAlg}

In order to demonstrate the algorithm of Section~\ref{sec:AbstractFFBD}
in concrete terms, we consider a state space model with time-$\ell$ state $X_\ell$
that develops according to 
\begin{IEEEeqnarray}{r,C,l}
X_\ell & = & A X_{\ell-1} + B U_\ell  \\
Y_\ell & = & C X_\ell
\IEEEeqnarraynumspace
\end{IEEEeqnarray}
with matrices $A,B,C$ of suitable dimensions,
cf.\ \Fig{fig:SimpleSSM}.
The inputs $U_1,\ldots,U_n$ are independent Gaussian r.v.s
with mean $\msgf{m}{U_\ell}$ and variance (or covariance matrix) $\msgf{V}{U_\ell}$.
We want to form a joint MAP estimate of all inputs and outputs,
where the latter are scalar and constrained either by
$Y_\ell \geq a_\ell$ 
or by 
$Y_\ell \leq b_\ell$
(depending on $\ell$).

Using the tables in \cite[Appendix~A]{LBHWZ:ITA2016} and (\ref{eqn:msgfVdXfromX})--(\ref{eqn:FromDualMarginalToPrimalMarginal}),
it is straightforward to 
put together
(e.g.) Algorithm~\ref{alg:IFFBDD},
where the forward recursion is a standard Kalman filter,
but the backward recursion is new. Note that (in this setting) Algorithm~\ref{alg:IFFBDD} involves no matrix inversion.
Moreover, only $C\msgf{V}{X_\ell'}$ and $C\msgf{m}{X_\ell'}$ 
(but not the covariance matrix $\msgf{V}{X_\ell'}$)
need to be stored from the forward recursion for the backward recursion.

\begin{figure}[t]
\centering
\begin{tikzpicture}[scale=0.089, >=latex]
\tikzset{%
 stdbox/.style = {draw, rectangle, inner sep=0mm, 
                  minimum width=5mm, minimum height=5mm},
 medbox/.style = {draw, rectangle, 
                  minimum width=6mm, minimum height=6mm},
 blobbox/.style = {draw, fill=black, rectangle, inner sep=0mm, 
                   minimum width=1.75mm, minimum height=1.75mm},
}
\draw (0,0) node[stdbox, label={above: $p(x_0)$}] (p0) {};
\draw (p0)+(13,0) node (termX0) {};
\draw[->] (p0) -- node[above, pos=0.5]{$X_0$} (termX0);
\draw (termX0)+(4,0) node {$\cdots$};

\draw (termX0)+(8,0) node (termXlm1) {};
\draw (termXlm1)+(13,0) node[medbox] (A1) {$A$};
\draw[->] (termXlm1) -- node[above, pos=0.4]{$X_{\ell-1}$} (A1);
\draw (A1)+(15,0) node[stdbox] (add1) {$+$};
\draw[->] (A1) -- (add1);

\draw (add1)+(0,12.5) node[medbox] (b1) {$B$};
\draw[->] (b1) -- (add1);
\draw (b1)+(0,15) node[stdbox, label={left:$\msgf{\mu}{U_\ell}$}] (pu1) {};
\draw[->] (pu1) -- node[left]{$U_\ell$} (b1);

\draw (add1)+(15,0) node[stdbox] (equ1) {$=$};
\draw[->] (add1) -- node[above]{$X_\ell'$} (equ1);

\draw (equ1)+(0,-12.5) node[medbox] (c1) {$C$};
\draw[->] (equ1) -- (c1);
\draw (c1)+(0,-15) node[stdbox, label={left: $\msgb{\mu}{Y_\ell}$}] (py1) {};
\draw[->] (c1) -- node[right]{$Y_\ell$} (py1);

\draw (equ1)+(13,0) node (termXl) {};
\draw[->] (equ1) -- node[above, pos=0.5]{$X_\ell$} (termXl);
\draw (termXl)+(4,0) node {$\cdots$};
\end{tikzpicture}
\caption{\label{fig:SimpleSSM}%
Factor graph of the state space model of Section~\ref{sec:IFFBDD:GaussianAlg}.}
\noindent
\end{figure}
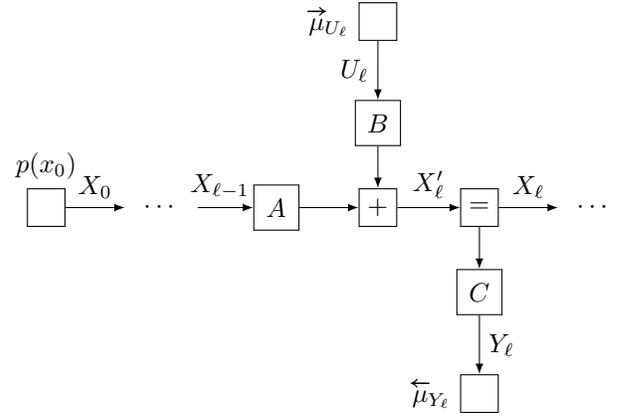

\begin{algorithm}[tbp]
\caption{\label{alg:IFFBDD}%
IFFBDD for the state space model of \Fig{fig:SimpleSSM}.}
\begin{algorithmic}[1]
\STATE{Initialize $\msgb{m}{Y_\ell}, \msgb{V}{Y_\ell}$ and $\gamma_{\ell}$ for $\ell=1,\ldots,n$.}
\WHILE{not converged}
  \STATE{\emph{Standard forward Kalman filtering:}}
  \STATE{Set $\msgf{m}{X_0}$ and $\msgf{V}{X_0}$ according to $p(x_0)$}
  \FOR{$\ell=1$ to $n$}
     \STATE{$\msgf{V}{X_\ell'} = A \msgf{V}{X_{\ell-1}} A^\T + B \msgf{V}{U_\ell} B^\T$}
     \STATE{$\msgf{m}{X_\ell'} = A\msgf{m}{X_{\ell-1}} + B\msgf{m}{U_\ell}$}
     \STATE{$G_\ell = (\msgb{V}{Y_\ell} + C \msgf{V}{X_\ell'} C^\T)^{-1}$}
     \STATE{$\msgf{V}{X_\ell} = \msgf{V}{X_\ell'} - \msgf{V}{X_\ell'} C^\T G_\ell C \msgf{V}{X_\ell'}$}
     \STATE{$\msgf{m}{X_\ell} = \msgf{m}{X_\ell'} + \msgf{V}{X_\ell'} C^\T G_\ell (\msgb{m}{Y_\ell} - C\msgf{m}{X_\ell'})$}
  \ENDFOR
  \STATE{\emph{Backward dual deciding:}}
  \STATE{$\hat{\tilde x}_n = 0$}
  \FOR{$\ell=n$ to $1$}
     \STATE{$\msgf{V}{Y_\ell} = C \msgf{V}{X_\ell'} C^\T$}
     \STATE{$\msgf{m}{Y_\ell} = C\msgf{m}{X_\ell'} - C\msgf{V}{X_\ell'} \hat{\tilde x}_\ell$}
     \STATE{$\msgf{V}{\tilde Y_\ell} = \msgf{V}{Y_\ell}^{-1}$ ~~and~~ $\msgf{m}{\tilde Y_\ell} = \msgf{V}{\tilde Y_\ell} \msgf{m}{Y_\ell}$}
     \STATE{Decide $\hat{\tilde y}_\ell$ using (\ref{eqn:Zlegb:DualDecision}) \& (\ref{eqn:Zleqb:Gamma}), or (\ref{eqn:Zgegb:DualDecision}) \& (\ref{eqn:Zgeqb:Gamma}).}
     \STATE{Update $\msgb{m}{\tilde Y_\ell}$ and $\msgb{V}{\tilde Y_\ell}$ using (\ref{eqn:Zleqb:UpdateMsgb}) or (\ref{eqn:Zgeqb:UpdateMsgb}).}
     \STATE{$\msgb{V}{Y_\ell} = \msgb{V}{\tilde Y_\ell}^{-1}$ ~~and~~ $\msgb{m}{Y_\ell} = - \msgb{V}{Y_\ell}\msgb{m}{\tilde Y_\ell}$}
     \STATE{$\hat{\tilde x}_{\ell}' = \hat{\tilde x}_\ell + C^\T \hat{\tilde y}_\ell$}
     \STATE{$\hat{\tilde x}_{\ell-1} = A^\T \hat{\tilde x}_{\ell}'$}
     \STATE{\emph{Estimates of inputs and outputs:}}
     \STATE{$\hat u_\ell = \msgf{m}{U_\ell} - \msgf{V}{U_\ell} B^\T \hat{\tilde x}_{\ell}'$}
     \STATE{$\hat y_\ell = \msgf{m}{Y_\ell} - \msgf{V}{Y_\ell} \hat{\tilde y}_\ell$}
  \ENDFOR
  \STATE{$\hat{x}_{0}=\msgf{m}{X_{0}}-\msgf{V}{X_{0}}\hat{\tilde x}_{0} $}
\ENDWHILE
\end{algorithmic}
\end{algorithm}

Some numerical experiments with this algorithm
(extended to handle multiple constraints per time step)
are reported in the appendix 
(only in the arXiv version).
In these experiments, this algorithm converges faster 
than both IRLGE (cf.\ Section~\ref{sec:BasicAM})
and popular numerical software for model predictive control
\cite{DCB:embsolv2013,Donog:opspl2021,GoChen:arXiv:clarabel}.

\section{Conclusion}

Using the minimax theorem, 
we have extended the NUP approach 
to factor graphs with convex-dual variables.
In particular, we proposed a new forward-backward algorithm (IFFBDD) 
that is dual to IBFFD \cite{YunPengLiLg:AISTATS2024}
and can handle loss functions
or constraints imposed on dependent variables.

%

\begin{table}[!tbp]
	\caption{Convex conjugates of loss functions}
	\label{tbl:DualNUP}
	\begin{center}
		\begin{tabular}{l@{\hspace{2em}}llll}
			&$\kappa_{\ell}(z_{\ell})$& $\kappa_{\ell}^{*}(\tilde{z}_{\ell})$
			\\ \midrule
			\rule{0em}{2.5ex}\makecell[l]{Laplace/L1}  &$\beta|z_{\ell}-a|$& 
			$	\begin{cases}
				+\infty& \tilde{z}_{\ell} < -\beta\\
				a\tilde{z}_{\ell}& -\beta \le \tilde{z}_{\ell}\le \beta\\
				+\infty& \tilde{z}_{\ell} >\beta
			\end{cases}$
			\vspace{2mm}\\
			\makecell[l]{hinge loss I}  &$\begin{cases}
				\beta(a-z_{\ell}) & z_{\ell}<a\\
				0 & z_{\ell}\ge a
			\end{cases}$& 
			$	\begin{cases}
				+\infty & \tilde{z}_{\ell}<-\beta \\
				a\tilde{z}_{\ell} & -\beta \le \tilde{z}_{\ell} \le 0\\
				+\infty & \tilde{z}_{\ell} >0
			\end{cases}$ 	
			\vspace{2mm}\\
			\makecell[l]{hinge loss II}     &  $\begin{cases}
				0& z_{\ell} \le b\\
				\beta(z_{\ell}-b) & z_{\ell}>b
			\end{cases}$&
			$\begin{cases}
				+\infty &  \tilde{z}_{\ell} <0\\
				b\tilde{z}_{\ell}& 0 \le \tilde{z}_{\ell}\le \beta\\
				+\infty  & \tilde{z}_{\ell} >\beta\
			\end{cases}$ 	
			\vspace{2mm}\\
			\makecell[l]{Vapnik loss }     &
			$\begin{cases}
				2\beta(a-z_{\ell}) &  z_{\ell}<a\\
				0 &  a\le z_{\ell} \le b\\
				2\beta(z_{\ell}-b) &z_{\ell} >b
			\end{cases}$
			& 
			$\begin{cases}
				+\infty & \tilde{z}_{\ell}<-2\beta\\
				a\tilde{z}_{\ell} & -2\beta \le \tilde{z}_{\ell} \le 0\\
				b\tilde{z}_{\ell} &0<\tilde{z}_{\ell} \le 2\beta\\
				+\infty& \tilde{z}_{\ell}>2\beta
			\end{cases}$ 	 \\
		\end{tabular}
	\end{center}
\end{table}

\begin{table*}[!t]
	\caption{Update rules for dual NUP parameters}
	\label{tbl:DualNUPUpdating}
	\begin{center}
		\begin{tabular}{l@{\hspace{2em}}llll}
			&  $\breve{\kappa}_{\ell}(\tilde{z}_{\ell})$ & $\msgb{m}{\tilde{Z}_{\ell}}$ & $\msgb{V}{\tilde{Z}_{\ell}}$
			\\ \midrule
			\rule{0em}{2.5ex}\makecell[l]{Laplace/L1}  
			&$
			\begin{cases}
				(a-\gamma_{\ell})\tilde{z}_{\ell} -\beta\gamma_{\ell}& \tilde{z}_{\ell} < -\beta\\
				a\tilde{z}_{\ell}& -\beta \le \tilde{z}_{\ell}\le \beta\\
				(a+\gamma_{\ell})\tilde{z}_{\ell} -\beta\gamma_{\ell}& \tilde{z}_{\ell} >\beta
			\end{cases}$
			& $-\frac{2a|\tilde{z}_{\ell}+\beta||\tilde{z}_{\ell}-\beta|+\beta \gamma_{\ell} \big(|\tilde{z}_{\ell}-\beta|-|\tilde{z}_{\ell}+\beta|\big)}{\gamma_{\ell}\big(|\tilde{z}_{\ell}+\beta|+|\tilde{z}_{\ell}-\beta|\big)}$& $\frac{2|\tilde{z}_{\ell}+\beta||\tilde{z}_{\ell}-\beta|}{\gamma_{\ell}\big(|\tilde{z}_{\ell}+\beta|+|\tilde{z}_{\ell}-\beta|\big)}$ 
			\vspace{2mm}\\
			\makecell[l]{hinge loss I}  
			& $	\begin{cases}
				(a-\gamma_{\ell})\tilde{z}_{\ell} -\beta\gamma_{\ell}& \tilde{z}_{\ell}<-\beta \\
				a\tilde{z}_{\ell} & -\beta \le \tilde{z}_{\ell} \le 0\\
				(a+\gamma_{\ell})\tilde{z}_{\ell} & \tilde{z}_{\ell} >0
			\end{cases}$ 		& $-\frac{2a|\tilde{z}_{\ell}+\beta||\tilde{z}_{\ell}|+\beta\gamma_{\ell}|\tilde{z}_{\ell}|}{\gamma_{\ell}\big(|\tilde{z}_{\ell}+\beta|+|\tilde{z}_{\ell}| \big)}$ & $\frac{2|\tilde{z}_{\ell}+\beta||\tilde{z}_{\ell}|}{\gamma_{\ell}\big(|\tilde{z}_{\ell}+\beta|+|\tilde{z}_{\ell}|\big)}$ 
			\vspace{2mm}\\
			\makecell[l]{hinge loss II}     & 
			$\begin{cases}
				(b-\gamma_{\ell})\tilde{z}_{\ell} &  \tilde{z}_{\ell} <0\\
				b\tilde{z}_{\ell}& 0 \le \tilde{z}_{\ell}\le \beta\\
				(b+\gamma_{\ell})\tilde{z}_{\ell} -\beta\gamma_{\ell}  & \tilde{z}_{\ell} >\beta\
			\end{cases}$ 		& $	 -\frac{2b|\tilde{z}_{\ell}||\tilde{z}_{\ell}-\beta|-\beta\gamma_{\ell}|\tilde{z}_{\ell}|}{\gamma_{\ell}\big(|\tilde{z}_{\ell}|+|\tilde{z}_{\ell}-\beta| \big)}$ & $\frac{2|\tilde{z}_{\ell}||\tilde{z}_{\ell}-\beta|}{\gamma_{\ell}\big(|\tilde{z}_{\ell}|+|\tilde{z}_{\ell}-\beta|\big)}$ 
			\vspace{2mm}\\
			\makecell[l]{Vapnik loss }        
			&$\begin{cases}
				(a-\gamma_{\ell})\tilde{z}_{\ell} -2\beta\gamma_{\ell} & \tilde{z}_{\ell}<-2\beta\\
				a\tilde{z}_{\ell} & -2\beta \le \tilde{z}_{\ell} \le 0\\
				b\tilde{z}_{\ell} &0<\tilde{z}_{\ell} \le 2\beta\\
				(b+\gamma_{\ell})\tilde{z}_{\ell} -2\beta\gamma_{\ell} & \tilde{z}_{\ell}>2\beta
			\end{cases}$ 		& $\begin{cases}
				-\frac{2a|\tilde{z}_{\ell}+2\beta||\tilde{z}_{\ell}|+2\beta\gamma_{\ell}|\tilde{z}_{\ell}|}{\gamma_{\ell}\big(|\tilde{z}_{\ell}+2\beta|+|\tilde{z}_{\ell}| \big)}& \tilde{z}_{\ell} \le 0\\
				-\frac{2b|\tilde{z}_{\ell}||\tilde{z}_{\ell}-2\beta|-2\beta\gamma_{\ell}|\tilde{z}_{\ell}|}{\gamma_{\ell}\big(|\tilde{z}_{\ell}|+|\tilde{z}_{\ell}-2\beta| \big)}&  \tilde{z}_{\ell}>0
			\end{cases}$& $\begin{cases}
				\frac{2|\tilde{z}_{\ell}+2\beta||\tilde{z}_{\ell}|}{\gamma_{\ell}\big(|\tilde{z}_{\ell}+2\beta|+|\tilde{z}_{\ell}|\big)} &  \tilde{z}_{\ell} \le 0\\
				\frac{2|\tilde{z}_{\ell}||\tilde{z}_{\ell}-2\beta|}{\gamma_{\ell}\big(|\tilde{z}_{\ell}|+|\tilde{z}_{\ell}-2\beta|\big)} &   \tilde{z}_{\ell} >0
			\end{cases}$ \\
		\end{tabular}
	\end{center}
\end{table*}

\appendices

\section{Update Rules for Some Dual NUP Representations}
\label{appsec:DualNUP}

In this appendix, we give the update formulas for the dual NUP representations
of some basic scalar convex loss functions. The results are stated without their derivations.

\subsection{Loss Functions and Proxy Dual Loss Functions}

We consider the Laplace/L1 loss function, both versions 
of the hinge loss function, and the Vapnik loss function,
cf.\ the first column of Table~\ref{tbl:DualNUP}.
Note that all these loss functions have a slope parameter $\beta>0$.

The actual convex conjugate $\kappa_{\ell}^\ast(\tilde{z}_{\ell})$ of these loss functions~$\kappa_{\ell}(z_{\ell})$ are given in Table~\ref{tbl:DualNUP}.
As exemplified in Section~\ref{sec:HalfSpaceDualized}, 
we can work with a proxy function $\breve\kappa_{\ell}(\tilde{z}_{\ell})$
with a parameter $\gamma_{\ell}$ such that, 
for sufficiently large $\gamma_{\ell}$, $\breve\kappa_{\ell}(\tilde{z}_{\ell})$ effectively agrees with $\kappa_{\ell}^\ast(\tilde{z}_{\ell})$.
This proxy function is listed in the second column of Table~\ref{tbl:DualNUPUpdating}.

The actual NUP representation of $\breve\kappa_{\ell}(\tilde{z}_{\ell})$ is not given in this appendix
(but it can be derived using the variational representation of the L1 norm
\cite{KeuLg:mpcNUVarxiv2023,Lg:MLSP2023}).
However, the resulting update rules for the mean and the variance 
are given in Table~\ref{tbl:DualNUPUpdating}.
The pertinent decision rules for the respective variables 
are given in Table~\ref{tbl:NUPDualdeciding}.

\begin{table}[t]
\begin{center}
\caption{Dual deciding rule}%
\label{tbl:NUPDualdeciding}
\vspace{-1ex}
\begin{tabular}{@{}p{10mm}l@{}}
	& \quad \quad \quad $\hat{\tilde{z}}_{\ell}$ \quad\quad\quad\quad \quad \quad \quad condition
	\\ \midrule
	\makecell[l]{Laplace/L1}&  $\begin{cases}
		\msgf{m}{\tilde{Z}_{\ell}}  - (a-\gamma_{\ell})\msgf{V}{\tilde{Z}_{\ell}}  & 	\msgf{m}{\tilde{Z}_{\ell}} < (a-\gamma_{\ell})\msgf{V}{\tilde{Z}_{\ell}} -\beta\\
		-\beta & (a-\gamma_{\ell})\msgf{V}{\tilde{Z}_{\ell}} -\beta \le \msgf{m}{\tilde{Z}_{\ell}} <a\msgf{V}{\tilde{Z}_{\ell}} -\beta\\
		\msgf{m}{\tilde{Z}_{\ell}}  - a\msgf{V}{\tilde{Z}_{\ell}}  & a\msgf{V}{\tilde{Z}_{\ell}} -\beta \le  \msgf{m}{\tilde{Z}_{\ell}} <a\msgf{V}{\tilde{Z}_{\ell}}  + \beta \\
		\beta  &a\msgf{V}{\tilde{Z}_{\ell}}  + \beta \le  \msgf{m}{\tilde{Z}_{\ell}}  < (a+\gamma_{\ell})\msgf{V}{\tilde{Z}_{\ell}} +\beta\\
		\msgf{m}{\tilde{Z}_{\ell}}  - (a+\gamma_{\ell})\msgf{V}{\tilde{Z}_{\ell}}  & \msgf{m}{\tilde{Z}_{\ell}}  \ge (a+\gamma_{\ell})\msgf{V}{\tilde{Z}_{\ell}} +\beta\\
	\end{cases}$
	\vspace{1.5mm} \\
		\makecell[l]{hinge loss I} & $\begin{cases}
			\msgf{m}{\tilde{Z}_{\ell}}  - (a-\gamma_{\ell})\msgf{V}{\tilde{Z}_{\ell}}  & 	\msgf{m}{\tilde{Z}_{\ell}} < (a-\gamma_{\ell})\msgf{V}{\tilde{Z}_{\ell}} -\beta\\
			-\beta  &  (a-\gamma_{\ell})\msgf{V}{\tilde{Z}_{\ell}} -\beta \le \msgf{m}{\tilde{Z}_{\ell}}  <a\msgf{V}{\tilde{Z}_{\ell}} -\beta\\
			\msgf{m}{\tilde{Z}_{\ell}}  - a\msgf{V}{\tilde{Z}_{\ell}}   & a\msgf{V}{\tilde{Z}_{\ell}} -\beta \le \msgf{m}{\tilde{Z}_{\ell}}  <a\msgf{V}{\tilde{Z}_{\ell}}  \\
			0&   a\msgf{V}{\tilde{Z}_{\ell}} \le \msgf{m}{\tilde{Z}_{\ell}} < (a+\gamma_{\ell})\msgf{V}{\tilde{Z}_{\ell}} \\
			\msgf{m}{\tilde{Z}_{\ell}}  - (a+\gamma_{\ell})\msgf{V}{\tilde{Z}_{\ell}}  & \msgf{m}{\tilde{Z}_{\ell}} \ge (a+\gamma_{\ell})\msgf{V}{\tilde{Z}_{\ell}} \\
		\end{cases}$ 
		\vspace{1.5mm} \\
		\makecell[l]{hinge loss II} & $\begin{cases}
			\msgf{m}{\tilde{Z}_{\ell}}  - (b-\gamma_{\ell})\msgf{V}{\tilde{Z}_{\ell}}  & 	\msgf{m}{\tilde{Z}_{\ell}} < (b-\gamma_{\ell})\msgf{V}{\tilde{Z}_{\ell}} \\
			0 & (b-\gamma_{\ell})\msgf{V}{\tilde{Z}_{\ell}}  \le 	\msgf{m}{\tilde{Z}_{\ell}}  <b\msgf{V}{\tilde{Z}_{\ell}} \\
			\msgf{m}{\tilde{Z}_{\ell}}  - b\msgf{V}{\tilde{Z}_{\ell}}   & b\msgf{V}{\tilde{Z}_{\ell}}  \le \msgf{m}{\tilde{Z}_{\ell}}  < b\msgf{V}{\tilde{Z}_{\ell}} +\beta \\
			\beta&   b\msgf{V}{\tilde{Z}_{\ell}} +\beta \le \msgf{m}{\tilde{Z}_{\ell}} < (b+\gamma_{\ell})\msgf{V}{\tilde{Z}_{\ell}} +\beta\\
			\msgf{m}{\tilde{Z}_{\ell}}  - (b+\gamma_{\ell})\msgf{V}{\tilde{Z}_{\ell}}  & \msgf{m}{\tilde{Z}_{\ell}}  \ge (b+\gamma_{\ell})\msgf{V}{\tilde{Z}_{\ell}} +\beta\\
		\end{cases}$ 
		\vspace{1.5mm} \\
			\makecell[l]{Vapnik loss}& $\begin{cases}
				\msgf{m}{\tilde{Z}_{\ell}}  - (a-\gamma_{\ell})\msgf{V}{\tilde{Z}_{\ell}}  & 	\msgf{m}{\tilde{Z}_{\ell}} < (a-\gamma_{\ell})\msgf{V}{\tilde{Z}_{\ell}} -2\beta\\
				-2\beta &   (a-\gamma_{\ell})\msgf{V}{\tilde{Z}_{\ell}} -2\beta \le  \msgf{m}{\tilde{Z}_{\ell}} <a\msgf{V}{\tilde{Z}_{\ell}} -2\beta\\ 
				\msgf{m}{\tilde{Z}_{\ell}} - a\msgf{V}{\tilde{Z}_{\ell}} & a\msgf{V}{\tilde{Z}_{\ell}} -2\beta \le  	\msgf{m}{\tilde{Z}_{\ell}} < a\msgf{V}{\tilde{Z}_{\ell}} \\
				0 & a\msgf{V}{\tilde{Z}_{\ell}}  \le 	\msgf{m}{\tilde{Z}_{\ell}}  \le b\msgf{V}{\tilde{Z}_{\ell}} \\
				\msgf{m}{\tilde{Z}_{\ell}} - b\msgf{V}{\tilde{Z}_{\ell}}  & b\msgf{V}{\tilde{Z}_{\ell}} < \msgf{m}{\tilde{Z}_{\ell}} <b\msgf{V}{\tilde{Z}_{\ell}} +2\beta\\
				2\beta& b\msgf{V}{\tilde{Z}_{\ell}} +2\beta \le \msgf{m}{\tilde{Z}_{\ell}}  < (b+\gamma_{\ell})\msgf{V}{\tilde{Z}_{\ell}} +2\beta\\ 
				\msgf{m}{\tilde{Z}_{\ell}}  - (b+\gamma_{\ell})\msgf{V}{\tilde{Z}_{\ell}}  & 	\msgf{m}{\tilde{Z}_{\ell}} \ge  (b+\gamma_{\ell})\msgf{V}{\tilde{Z}_{\ell}} +2\beta\\
			\end{cases}$\\
		\end{tabular}
	\end{center}
\end{table}

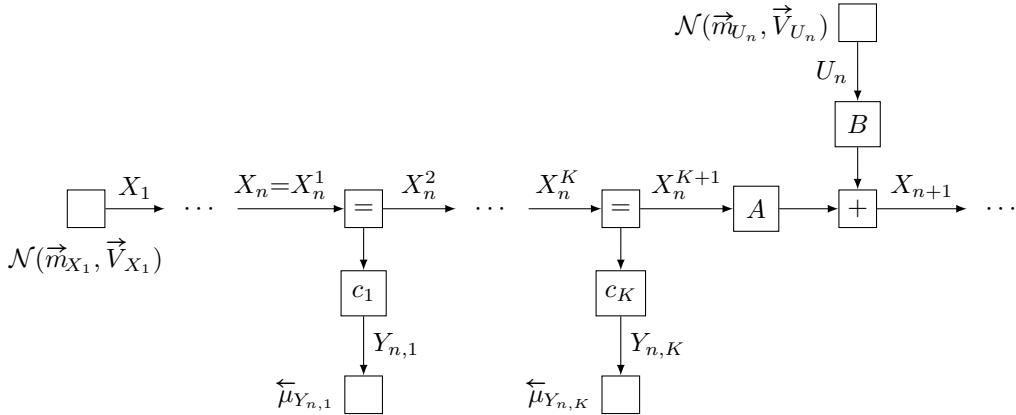
\begin{figure*}[!b]
\vspace{3ex}  
\centering
\begin{tikzpicture}[scale=0.09, >=latex]
\tikzset{%
 stdbox/.style = {draw, rectangle, inner sep=0mm, 
                  minimum width=5mm, minimum height=5mm},
 medbox/.style = {draw, rectangle, 
                  minimum width=6mm, minimum height=6mm},
 blobbox/.style = {draw, fill=black, rectangle, inner sep=0mm, 
                   minimum width=1.75mm, minimum height=1.75mm},
}
\draw (0,0) node[stdbox, label={below: $\calN(\msgf{m}{X_1}, \msgf{V}{X_1})$}] (p1) {};
\draw (p1)+(13,0) node (termX1) {};
\draw[->] (p1) -- node[above, pos=0.5]{$X_1$} (termX1);
\draw (termX1)+(4,0) node {$\cdots$};
\draw (termX1)+(8,0) node (startObsBlock1) {};

\draw (startObsBlock1)+(20,0) node[stdbox] (equ1) {$=$};
\draw[->] (startObsBlock1) -- node[above, pos=0.4]{$X_n{=}X_n^1$} (equ1);
\draw (equ1)+(0,-12.5) node[medbox] (c1) {$c_1$};
\draw[->] (equ1) -- (c1);
\draw (c1)+(0,-15) node[stdbox, label={left: $\msgb{\mu}{Y_{n,1}}$}] (py1) {};
\draw[->] (c1) -- node[right]{$Y_{n,1}$} (py1);
\draw (equ1)+(15,0) node (endObsBlock1) {};
\draw[->] (equ1) -- node[above]{$X_n^2$} (endObsBlock1);
\draw (endObsBlock1)+(4,0) node {$\cdots$};
\draw (endObsBlock1)+(8,0) node (startObsBlockK) {};

\draw (startObsBlockK)+(15,0) node[stdbox] (equK) {$=$};
\draw[->] (startObsBlockK) -- node[above, pos=0.4]{$X_n^K$} (equK);
\draw (equK)+(0,-12.5) node[medbox] (cK) {$c_K$};
\draw[->] (equK) -- (cK);
\draw (cK)+(0,-15) node[stdbox, label={left: $\msgb{\mu}{Y_{n,K}}$}] (pyK) {};
\draw[->] (cK) -- node[right]{$Y_{n,K}$} (pyK);

\draw (equK)+(20,0) node[medbox] (A1) {$A$};
\draw[->] (equK) -- node[above]{$X_{n}^{K+1}$} (A1);
\draw (A1)+(15,0) node[stdbox] (add) {$+$};
\draw[->] (A1) -- (add);

\draw (add)+(0,12.5) node[medbox] (b) {$B$};
\draw[->] (b) -- (add);
\draw (b)+(0,15) node[stdbox, label={left: $\calN(\msgf{m}{U_n}, \msgf{V}{U_n})$}] (pu) {};
\draw[->] (pu) -- node[left]{$U_n$} (b);

\draw (add)+(17.5,0) node (termXn1) {};
\draw[->] (add) -- node[above, pos=0.5]{$X_{n+1}$} (termXn1);
\draw (termXn1)+(4,0) node {$\cdots$};
\end{tikzpicture}
\caption{\label{fig:ExtSSM}%
Factor graph of the state space model in Appendix~\ref{appsec:NumExp}.}
\noindent
\end{figure*}

The proxy function $\breve\kappa_{\ell}(\tilde{z}_{\ell})$ agrees with $\kappa_{\ell}^\ast(\tilde{z}_{\ell})$ 
precisely in the middle rows in the second column of Table~\ref{tbl:DualNUPUpdating},
i.e., for $\tilde z_\ell$ inside a finite interval 
(e.g., $-\beta \leq \tilde Z_\ell \leq \beta$ for Laplace/L1).
Using Table~\ref{tbl:NUPDualdeciding}, 
these constraints on $\tilde Z_\ell$ can be guaranteed by increasing $\gamma_{\ell}$, if necessary,  to the following values:
\begin{enumerate}
	\item Laplace/L1: 
	\begin{equation}
		\label{eqn:App:Laplace:Gamma}
		\gamma_{\ell} = \max\left\{a-\frac{\msgf{m}{\tilde{Z}_{\ell}} +\beta}{\msgf{V}{\tilde{Z}_{\ell}} }, ~\frac{\msgf{m}{\tilde{Z}_{\ell}} -\beta}{\msgf{V}{\tilde{Z}_{\ell}} }-a\right\}.
	\end{equation}
	\item  hinge loss I:
	\begin{equation}
			\label{eqn:App:HingeLossI:Gamma}
		\gamma_{\ell} = \max\left\{a-\frac{\msgf{m}{\tilde{Z}_{\ell}} +\beta}{\msgf{V}{\tilde{Z}_{\ell}} }, ~\frac{\msgf{m}{\tilde{Z}_{\ell}} }{\msgf{V}{\tilde{Z}_{\ell}} }-a\right\}.
	\end{equation}
	\item  hinge loss II:
	\begin{equation}
		\label{eqn:App:HingeLossII:Gamma}
		\gamma_{\ell} = \max\left\{b-\frac{\msgf{m}{\tilde{Z}_{\ell}} }{\msgf{V}{\tilde{Z}_{\ell}} }, ~\frac{\msgf{m}{\tilde{Z}_{\ell}} -\beta}{\msgf{V}{\tilde{Z}_{\ell}} }-b\right\}.
	\end{equation}
	\item  Vapnik loss:
	\begin{equation} \label{eqn:VapnikDualGammaMin}
		\gamma_{\ell} = \max\left\{a-\frac{\msgf{m}{\tilde{Z}_{\ell}} +2\beta}{\msgf{V}{\tilde{Z}_{\ell}} }, ~\frac{\msgf{m}{\tilde{Z}_{\ell}} -2\beta}{\msgf{V}{\tilde{Z}_{\ell}} }-b\right\}.
	\end{equation}
\end{enumerate}

\subsection{Half-Space Constraints and Interval Constraints}

With sufficiently large $\beta$, 
the hinge loss functions can be used to enforce a half-space constraint
$Z_{\ell}\ge a$ or $Z_{\ell}\le b$.
In fact, the pertinent update rules and decision rules 
in Table~\ref{tbl:DualNUPUpdating} and Table~\ref{tbl:NUPDualdeciding}, respectively,
work even for $\beta=+\infty$,
in which case they coincide with the pertinent 
formulas for the half-space constraints given in Section~\ref{sec:HalfSpaceDualized}.

A interval constraint $a\le Z_{\ell} \le b$
can be enforced by two separate half-space constraints $Z_{\ell}\ge a$ and $Z_{\ell}\le b$.
Alternatively, we can use the Vapnik loss with sufficiently large $\beta$.
Again, the update rules in Table~\ref{tbl:DualNUPUpdating} and Table~\ref{tbl:NUPDualdeciding}
work also for $\beta=+\infty$. 
However, (\ref{eqn:VapnikDualGammaMin}) 
breaks down for $\beta=+\infty$.
Fortunately, a closer analysis reveals that, for $\beta=+\infty$,
(\ref{eqn:VapnikDualGammaMin}) can be replaced by 
the condition
\begin{equation} \label{eqn:VapnikDualBoxGammaMin}
\gamma_{\ell}\in \mleft[ \frac{b-a}{2}, ~b-a \mright].
\end{equation}

\section{Numerical Experiments of Linear Model Predictive Control}
\label{appsec:NumExp}

In this appendix, we report some numerical experiments \footnote{ https://github.com/yunpli2sp/NUP4SSM/} with IFFBDD,
including a comparison with other algorithms.

\subsection{Model Structure and Problem Statement}

Consider a state space model with time-$n$ state $X_n$
that develops according to 
\begin{IEEEeqnarray}{r,C,l}
X_{n+1} & = & A X_n + B U_n  \label{eqn:App:SSM:StateEvolution} \\
Y_n & = & C X_n  \label{eqn:App:SSM:CombinedOut}
\IEEEeqnarraynumspace
\end{IEEEeqnarray}
for $n=1,\ldots,N,$
with matrices $A\in\R^{M\times M}$, $B\in\R^{M\times L}$, $C\in\R^{K\times M}$.
Note that the indexing differs from the indexing in Section~\ref{sec:IFFBDD:GaussianAlg}
(but agrees with the indexing in \cite{YunPengLiLg:AISTATS2024}).
For computational efficiency, (\ref{eqn:App:SSM:CombinedOut})
is split into $K$ scalar outputs
\begin{equation}
Y_{n,k} = c_k X_n,
\end{equation}
$k=1,\ldots,K,$
where $c_k$ is the $k$-th row of $C$.
The pertinent factor graph is shown in \Fig{fig:ExtSSM}.

The inputs $U_1,\ldots,U_N$ are independent Gaussian random vectors
with mean $\msgf{m}{U_n}=0$ and diagonal covariance matrix $\msgf{V}{U_n} = I/L$,
where $I$ denotes an identity matrix.
The initial state $X_1$ is a Gaussian random vector 
with mean $\msgf{m}{X_1}=0$ and diagonal covariance matrix $\msgf{V}{X_1} = I/M$.

The task is to compute the minimizer $(x_1, u_1,\ldots,u_N)$ of 
\begin{equation}
J(x_{1}, u_1,\ldots,u_N) \eqdef 
   \frac{1}{2}x_{1}^{\mathsf{T}}\msgf{V}{X_{1}}^{-1}x_{1} 
   + \frac{1}{2}\sum_{n=1}^{N}u_{n}^{\mathsf{T}}\msgf{V}{U_n}^{-1}u_{n}
\end{equation}
subject to the constraints
\begin{equation} \label{eqn:ExperimentBoxConstraints}
a_{n,k} \leq y_{n,k} \leq b_{n,k}.
\end{equation}

The suitably adapted IFFBDD algorithm 
(easily put together from (\ref{eqn:msgfVdXfromX})--(\ref{eqn:FromDualMarginalToPrimalMarginal}) 
and the tables in \cite[Appendix~A]{LBHWZ:ITA2016})
is given as Algorithm~\ref{alg:ExtSSM:IFFBD}.
In order to satisfy (\ref{eqn:ExperimentBoxConstraints}),
we use the Vapnik loss (as described in Appendix~\ref{appsec:DualNUP}) with
\begin{equation}
	\gamma_{n,k}\,=\,\frac{b_{n,k}-a_{n,k}}{2}.
\end{equation}

\begin{algorithm}[tbp]
	\caption{\label{alg:ExtSSM:IFFBD}%
	  IFFBDD for the state space model of \Fig{fig:ExtSSM}.} 
	\newcommand{\extraspace}{\vspace{0.2ex}} 
	\begin{algorithmic}[1]
		\STATE{Initialize $\msgb{m}{Y_{n,k}}, \msgb{V}{Y_{n,k}}$ and $\gamma_{n,k}$ for all $n,k$.}
		\WHILE{not converged}
		\STATE{\emph{Forward Kalman filtering:}}
		\FOR{$n = 1$ to $N$}
		\STATE{$\msgf{m}{X_{n}^{1}}=\msgf{m}{X_{n}} \text{~~and~~} \msgf{V}{X_{n}^{1}}=\msgf{V}{X_{n}}$}
        \extraspace
		\FOR{$k=1$ to $K$}
        \extraspace
		\STATE{$G_{n}^{k} = \big(\msgb{V}{Y_{n,k}} + c_{k}\msgf{V}{X_{n}^{k}}c_{k}^{\mathsf{T}} \big)^{-1}$}
        \extraspace
		\STATE{$\msgf{V}{X_{n}^{k+1}} = \msgf{V}{X_{n}^{k}} - \msgf{V}{X_{n}^{k}} c_{k}^{\mathsf{T}}G_{n}^{k} c_{k}\msgf{V}{X_{n}^{k}}$}
        \extraspace
		\STATE{$\msgf{m}{X_{n}^{k+1}} = \msgf{m}{X_{n}^{k}} + \msgf{V}{X_{n}^{k}} c_{k}^{\mathsf{T}}G_{n}^{k} \big(\msgb{m}{Y_{n,k}} - c_{k}\msgf{m}{X_{n}^{k}}\big)$}
        \extraspace
		\ENDFOR
		\STATE{$\msgf{m}{X_{n+1}} = A\msgf{m}{X_{n}^{K+1}} + B\msgf{m}{U_{n}}$}
		\STATE{$\msgf{V}{X_{n+1}} = A\msgf{V}{X_{n}^{K+1}}A^{\mathsf{T}} +B\msgf{V}{U_n}B^{\mathsf{T}}$}
        \extraspace
		\ENDFOR
		\STATE{\emph{Backward dual deciding:}}
		\STATE{$\hat{\tilde x}_{N+1}=0$}
		\FOR{$n = N$ to $1$}
           \extraspace
			\STATE{$\hat{u}_{n} = \msgf{m}{U_{n}}-\msgf{V}{U_{n}} B^\T \hat{\tilde x}_{n+1}$}
           \extraspace
			\STATE{$\hat{\tilde x}_n^{K+1} = A^\T \hat{\tilde x}_{n+1}$}
            \extraspace
			\FOR{$k= K$ to $1$}
                \extraspace
                \extraspace
			    \STATE{$\msgf{V}{Y_{n,k}} = c_{k} \msgf{V}{X_{n}^{k}} c_{k}^\T$}
			    \STATE{$\msgf{m}{Y_{n,k}} = c_{k} \msgf{m}{X_{n}^{k}} -c_{k} \msgf{V}{X_{n}^{k}} \hat{\tilde x}_n^{k+1}$}
                \extraspace
			    \STATE{$\msgf{V}{\tilde Y_{n,k}} = \msgf{V}{Y_{n,k}}^{-1}$}
			    \STATE{$\msgf{m}{\tilde Y_{n,k}} = \msgf{V}{\tilde Y_{n,k}} \msgf{m}{Y_{n,k}}$}
                \extraspace
				\STATE{Decide $\hat{\tilde y}_{n,k}$ using Table~\ref{tbl:NUPDualdeciding} and (\ref{eqn:App:Laplace:Gamma})-(\ref{eqn:VapnikDualBoxGammaMin}).}
				\STATE{Update $\msgb{m}{\tilde Y_{n,k}}$ and $\msgb{V}{\tilde Y_{n,k}}$ using Table~\ref{tbl:DualNUPUpdating}.}
                \extraspace
				\STATE{$\msgb{V}{Y_{n,k}} = \msgb{V}{\tilde Y_{n,k}}^{-1}$}
                \extraspace
				\STATE{$\msgb{m}{Y_{n,k}} = -\msgb{V}{Y_{n,k}} \msgb{m}{\tilde Y_{n,k}}$}
				\STATE{$\hat{y}_{n,k} = \msgf{m}{Y_{n,k}} - \msgf{V}{Y_{n,k}} \hat{\tilde y}_{n,k}$}
                \extraspace
                \extraspace
				\STATE{$\hat{\tilde x}_n^k = \hat{\tilde x}_n^{k+1} + c_k^\T \hat{\tilde y}_{n,k}$}
                \extraspace
			\ENDFOR
			\STATE{$\hat{\tilde x}_n = \hat{\tilde x}_n^1$}
		\ENDFOR
		\STATE{$\hat{x}_{1} =	\msgf{m}{X_{1}}-\msgf{V}{X_{1}}	\hat{\tilde x}_1 $}
		\ENDWHILE
	\end{algorithmic}  
\end{algorithm}

\subsection{Details of the Numerical Experiments}

We perform 100 independent repetitions of the following experiment.

The entries of the matrices $A,B,C$
are sampled i.i.d.\ zero-mean (scalar) Gaussian with variance $1/N$.
The initial state $x_1$ is sampled from $\calN(0, \msgf{V}{X_1})$
and the inputs $u_1,\ldots,u_N$ are sampled independently from $\calN(0, \msgf{V}{U_n})$.

The bounds $a_{1,1},\ldots,a_{N,K}$ and $b_{1,1},\ldots,b_{N,K}$ are sampled as follows.
First, we compute the output signal $y_1,\ldots,y_N$ according to (\ref{eqn:App:SSM:StateEvolution})(\ref{eqn:App:SSM:CombinedOut}).
Then we sample $\breve y_{n,k}$
uniformly between \mbox{$0.9{\cdot} y_{n,k}$} and \mbox{$1.1{\cdot} y_{n,k}$}
(independently for all indices $n,k$).
Finally, we set
\begin{equation}
a_{n,k} = \breve{y}_{n,k}-0.1|\breve{y}_{n,k}| \text{~~and~~} b_{n,k} = \breve{y}_{n,k}+0.1|\breve{y}_{n,k}|.
\end{equation}

The pertinent dimensions are fixed to $M=40$, $K=20$, $L=20$,
and $N=1000$.

We then run Algorithm~\ref{alg:ExtSSM:IFFBD} (IFFBDD)
and several other algorithms including
ECOS \cite{DCB:embsolv2013}, 
SCS \cite{Donog:opspl2021},  
and 
CLARABEL \cite{GoChen:arXiv:clarabel}, 
which we use with their default settings.

We also run IRLGE as in \cite{LBHWZ:ITA2016,LMHW:Turbo2018,KeuLg:mpcNUVarxiv2023}
and Section~\ref{sec:BasicAM}. 
(IBFFD as in \cite{YunPengLiLg:AISTATS2024} does not work well in this example, 
cf.\ the remarks at the end of Section~\ref{sec:IBFFD}.)

\subsection{Experimental Results}

Since we have a convex optimization problem, 
different algorithms will differ primarily in their speed of convergence.
The pertinent comparison between IFFBDD (this paper) and 
IRLGE is shown in \Fig{fig:ConvergePlot}.

\Fig{fig:ConvergeTimes} shows the running time till convergence 
(or till 1000 iterations)
of all algorithms.
IFFBDD and IRLGE are stopped when the relative difference $J(x_{1},u_1,\ldots,u_N)$
between successive iterations drops below $10^{-8}$;
the other algorithms are used with their default stopping condition.
It is obvious from \Fig{fig:ConvergeTimes} that IFFBDD outperforms the other algorithms in this particular setting.

\newpage

\begin{figure}[h]
	\centering
	\includegraphics[width=0.9\linewidth]{./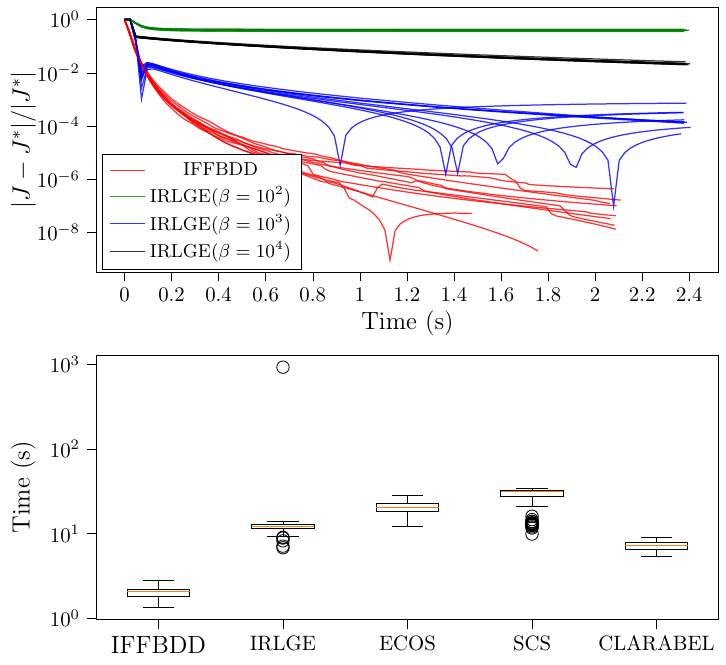}
	\caption{\label{fig:ConvergePlot}%
		Convergence speed of IFFBDD and IRLGE 
		(with different choice of the parameter $\beta$), with $10$ repetitions.
		Vertical axis: $J=J(x_{1},u_1,\ldots,u_N)$, and $J^{*}$ is the optimal value computed by CLARABEL.
	}
\bigskip
\bigskip

	\centering
	\includegraphics[width=0.9\linewidth]{./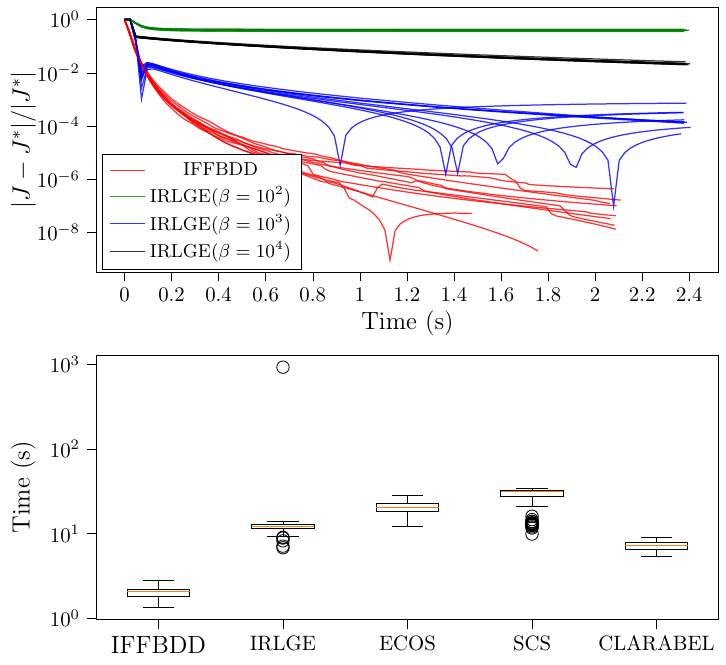}
	\caption{\label{fig:ConvergeTimes}%
		Running time to convergence, summarized over 100 repetitions.
	}
\end{figure}

\end{document}